\newtheorem{theorem}{Theorem}
\newtheorem{proposition}[theorem]{Proposition}
\newcommand{\ord}{\operatorname{rank}}
\newtheorem{definition}{Definition}
\newcommand{\appref}[1]{Appendix~\ref{#1}}
\newtcolorbox{keytakeaway}[1][]{%
  enhanced,
  breakable,
  colback=blue!5!white,
  colframe=blue!75!black,
  boxrule=0.6pt,
  arc=2pt,
  left=4pt,
  right=4pt,
  top=4pt,
  bottom=4pt,
  fonttitle=\bfseries,
  coltitle=black,
  #1
}
\begin{document}

\title{Localising Shortcut Learning in Pixel Space via Ordinal Scoring Correlations for Attribution Representations (OSCAR)}


\author{Akshit~Achara, Peter~Triantafillou, Esther~Puyol-Antón, Alexander Hammers,  Andrew~P.~King, for the Alzheimer’s Disease Neuroimaging Initiative

\IEEEcompsocitemizethanks{
\IEEEcompsocthanksitem Akshit Achara, Esther Puyol-Antón, Alexander Hammers,  Andrew P. King are with the School of Biomedical Engineering and Imaging Sciences, King's College London, WC2R 2LS London, United
Kingdom.\protect\\
E-mail: \{akshit.achara,esther.puyol\_anton, alexander.hammers,\protect\\
andrew.king\}@kcl.ac.uk \protect\\
\IEEEcompsocthanksitem Peter Triantafillou is with the Department of Computer Science, University of Warwick, Coventry, CV4 7AL, United Kingdom. \protect \\
E-mail: p.triantafillou@warwick.ac.uk
}}



\maketitle

\begin{abstract}
Deep neural networks often exploit \emph{shortcuts}. These are spurious cues which are associated with output labels in the training data but are unrelated to task semantics. Shortcuts can inflate in-distribution accuracy but fail under distribution shifts, when the association is absent. When the shortcut features are associated with sensitive attributes, shortcut learning can lead to biased model performance. Existing methods for localising and understanding shortcut learning are mostly based upon qualitative, image-level inspection and assume cues are human-visible, limiting their use in domains such as medical imaging where shortcut features may not be easily perceivable by humans. 

We introduce OSCAR (Ordinal Scoring Correlations for Attribution Representations), a model-agnostic framework for quantifying shortcut learning and localising shortcut features. OSCAR converts image-level task attribution maps into dataset-level rank profiles of image regions and compares them across three models: a balanced baseline model (BA), a test model (TS), and a sensitive attribute predictor (SA). By computing pairwise, partial, and deviation-based correlations on these rank profiles, we
produce
a set of quantitative metrics that characterise the degree of shortcut reliance for TS, together with a ranking of image-level regions that contribute most to it.


Experiments on CelebA natural images (hair colour $\times$ gender), CheXpert chest X-rays (pleural effusion $\times$ sex), and ADNI brain MRI (Alzheimer’s disease $\times$ sex) show that our correlations are (i) stable across seeds and partitions, (ii) sensitive to the level of association between shortcut features and output labels in the training data, and (iii) able to distinguish localised
from diffuse
shortcut features. As an illustration of the utility of our method, we show how worst-group performance disparities can be reduced using a simple test-time attenuation approach based on the identified shortcut regions. OSCAR provides a lightweight, pixel-space audit that yields statistical decision rules and spatial maps, enabling practitioners to test, localise, and mitigate shortcut reliance as part of model evaluation pipelines. The code is available at \url{https://github.com/acharaakshit/oscar}.
\end{abstract}

\begin{IEEEkeywords}
Interpretability, Shortcut learning, Fairness, Bias
\end{IEEEkeywords}


\section{Introduction}
\label{introduction}
Deep neural networks often exploit \emph{shortcuts}, which are predictive cues spuriously associated with output labels rather than task semantics \cite{geirhos2020shortcut}. Reliance on shortcuts can inflate in-distribution accuracy but fail under distribution shifts, when the association is absent. This phenomenon has been shown in natural image benchmarks (e.g., background-object associations in Waterbirds \cite{sagawa2019distributionally} and attribute associations in CelebA \cite{liu2015deep}) and in diagnostic models in medical imaging where, instead of pathology,  models have been shown to exploit features related to scanner type \cite{zech2018variable,kushol2023effects}, text markers \cite{degrave2021ai}, chest drains~\cite{jimenez2023detecting}, or surgical skin markings~\cite{winkler2019association}.

Attribution methods such as saliency maps \cite{simonyan2013deep} and Grad-CAM \cite{selvaraju2017grad} are frequently used to \emph{visualise} these failures, yet most analyses are qualitative and image-by-image, and they assume that shortcut cues are human-visible. In settings like chest X-ray or brain MRI classification, sensitive attributes (e.g., sex) can be predicted by models while remaining visually opaque to humans, challenging attribution-by-inspection alone.
For example, deep neural networks can classify sex and race from brain MRI~\cite{achara2025invisible, stanley2022fairness} but clinicians are generally not able to identify features to perform these tasks reliably. 
Our goal is to move away from anecdotal inspection of attribution maps to a stable and quantitative automated audit based upon them to identify and understand the nature of shortcut learning, even when its source(s) are complex.


Therefore, in this paper we introduce OSCAR (Ordinal Scoring Correlations for Attribution Representations), a model-agnostic, pixel-space framework that identifies and localises candidate shortcut-related regions. Given a trained model and its training data with associated sensitive attribute(s), we: (i) partition each image into mutually exclusive regions, (ii) summarise the importance of each region in each image by applying a scoring operator such as the mean of attribution values, and (iii) form aggregated rank profiles at the dataset level that we compare across three models: a balanced baseline (BA), a test model (TS), and a sensitive attribute prediction model (SA). We define  pairwise, partial, and deviation-based correlations
based on these profiles, which are used to identify and localise potential shortcut features linked to the sensitive attribute in TS. OSCAR can be combined with any model or attribution method and provides a pixel-space audit of TS that can be used to reduce shortcut learning and improve generalisation.


\subsection{Contributions}
\begin{itemize}[leftmargin=1.2em]
  \item We propose a procedure that turns image-level attribution maps in pixel-space into robust, image-region statistics and rank profiles at the dataset level.
  \item We define a family of aggregated rank profile correlations (pairwise, partial, deviation) with dataset-level inference.
  \item We show how different image partitioning methods (grid, superpixel and atlas) can be embedded into the method and that these can identify shortcut reliance even when shortcut cues are not human-visible.
  \item We demonstrate our method using empirical studies on CelebA natural images (hair colour $\times$ gender), CheXpert chest X-rays (pleural effusion $\times$ sex), and ADNI brain MRI (Alzheimer's disease $\times$ sex), showing that it  quantitatively reflects the degree of shortcut reliance, whether it is diffuse or localised and its origin in pixel space.
  \item We demonstrate the power of the information provided by our method by showing how a simple test-time attenuation technique based on the identified shortcut regions can improve worst group accuracy.
\end{itemize}

This paper significantly extends our previous conference publication~\cite{achara2025invisible}, which introduced the idea of quantitatively analysing demographic shortcut learning using attribution rankings in MRI-based Alzheimer's disease classification. 
Here, we
provide a detailed general formulation of the proposed methodology and present new experiments to validate its effectiveness and illustrate its power across multiple natural and medical image datasets.

\section{Related Work}
\label{sec:related}

\subsection{Shortcut Learning}
Classic demonstrations of shortcut learning include background--object associations in Waterbirds \cite{sagawa2019distributionally} and attribute associations in CelebA (e.g., hair colour with gender) \cite{liu2015deep}. In medical imaging, studies have shown that diagnostic models can exploit shortcuts related to image acquisition details, such as acquisition site in chest X-rays \cite{zech2018variable} or MRI scanner manufacturer \cite{kushol2023effects}. In addition, spurious features such as surgical skin markings in dermatological images \cite{winkler2019association}, or chest drains \cite{oakden2020hidden,jimenez2023detecting} or laterality markings \cite{degrave2021ai} in X-rays have also been shown to induce shortcut learning.

Beyond such obviously visible artefacts, recent work has shown that neural networks can also exploit subtle or latent imaging cues which are not obvious to humans. For example, in  \cite{gichoya2022ai}, the authors demonstrate that race can be predicted from medical images across multiple modalities even when such signals are not perceptible to human experts. This finding implies that sensitive attribute–related features may be embedded broadly across medical imaging datasets, and therefore be available as potential shortcut-inducing features during training. If the sensitive attribute is spuriously associated with the target label in the training data, models may inadvertently rely on these cues, leading to biased performance across sensitive groups.
These phenomena have been documented in chest X-ray classification, mammography, fundus imaging, and neuroimaging, where attributes such as sex, age, or race can act as unintended predictors of clinical labels \cite{degrave2021ai,kushol2023effects,brown2023detecting,achara2025invisible}. In our own prior work \cite{achara2025invisible}, we showed that MRI-based Alzheimer’s disease classifiers exploit demographic shortcuts (sex, race) even when the underlying cues are not visually interpretable, by analysing the alignment between disease and demographic attribution maps. Taken together, these results motivate methods that can detect, characterise, and ideally localise shortcut use in trained models, including settings where the shortcut signal is invisible to humans.

\subsection{Robustness Objectives and Dataset Interventions}
\label{subsection:robustnessobjectives}
A large body of work aims to reduce shortcut reliance by modifying training objectives or use of training data. For example, group distributionally robust optimisation (Group DRO) \cite{sagawa2019distributionally}, representation neutralisation~\cite{sarhan2020fairness}, invariant risk minimization (IRM) \cite{arjovsky2019invariant}, reweighing and rebalancing strategies \cite{idrissi2022simple}, and targeted data augmentation \cite{hendrycks2019augmix} all seek to encourage models to make use of features that are stable across environments, i.e. reduce shortcut learning. While these methods can improve in-distribution
performance, there is growing evidence that such gains do not necessarily persist under genuine distribution shift
\cite{schrouff2022diagnosing}, suggesting that they do not completely eliminate the use of spurious features in learning.
More importantly, these methods do not by themselves provide pixel-space evidence of which cues a trained model actually uses. Our work is complementary to these approaches: it provides a post-hoc, dataset-level test to accept or reject shortcut hypotheses in pixel space, even when cues are human-invisible.

\subsection{Attribution and Pixel-Space Localisation}
\label{subsection:attributionlocalisation}
Attribution methods estimate input importance given a model and prediction. Examples include saliency maps \cite{simonyan2013deep}, guided backpropagation \cite{springenberg2014striving}, layerwise relevance propagation (LRP)~\cite{binder2016layer}, integrated gradients \cite{sundararajan2017axiomatic}, Grad-CAM \cite{selvaraju2017grad} and occlusion-based tests \cite{zeiler2014visualizing}. Prior work on using these methods to understand shortcut learning has proposed qualitative inspections \cite{stanley2022fairness}, and sanity checks for explanation sensitivity \cite{adebayo2018sanity}. However, most analyses are image-by-image and lack statistical hypothesis tests at the dataset level.

Closer to our aims, some studies have aggregated attributions over regions or concepts (e.g., concept activation vectors) to compare models \cite{stanley2022fairness,kim2018interpretability,koh2020concept}. However, these methods require human-defined concepts or segmentations. In~\cite{lapuschkin2019unmasking}, the authors revealed ``Clever Hans'' 
predictors without such prior knowledge by clustering attribution maps to detect recurring spurious strategies. 
Unlike our approach, this method does not provide dataset-level hypothesis tests or the type of rank-based shortcut metrics we propose for localisation.
Our method addresses these limitations by (i) partitioning images into mutually exclusive blocks, (ii) summarising the importance of each block to the model via region-level statistics of attribution values, and (iii) comparing aggregated rank profiles at the dataset level across model triplets using permutation tests.

\subsection{Model Comparisons: Representation and Functional Similarity}
\label{subsection:modelcomparisonrepsim}
Comparisons between models in terms of internal representation  have been proposed as a means of identifying shortcut learning. For example, representation similarity analyses such as canonical correlation analysis or centred kernel alignment compare internal activations across networks \cite{raghu2017svcca,morcos2018insights,kornblith2019similarity}. Functional similarity approaches compare outputs under input perturbations or probes \cite{alain2016understanding,nguyen2016synthesizing}. When these comparisons are made between a `baseline' model and one featuring shortcut learning they can be used to gain insight into the nature of the shortcut features~\cite{boland2024there}. However, these tools reveal where and how networks are similar, but they operate in latent space or function space rather than pixel space. Our approach directly anchors comparisons to image regions and yields spatially interpretable statistics for testing shortcut alignment between models.

\subsection{Quantifying Bias and Spurious Feature Reliance}
\label{subsection:quantifyingbias}
A parallel literature quantifies dataset bias and spurious associations \cite{torralba2011unbiased}, discovers spurious attributes \cite{sohoni2020no}, or regularises gradients to be ``right for the right reasons'' \cite{ross2017right}. Compared to these, we do not require annotations for spurious regions, discovered environments, or gradient penalties. Instead, we use the idea that, if a test model uses a shortcut, then the \emph{ordering} of regional attribution importance will align with that of a sensitive attribute classifier. We formalise this via pairwise, partial, and deviation-based correlations over
dataset-level region statistics of attributions.

\subsection{Shortcut Testing and Bias Encoding}
\label{subsection:shortcuttesting}
Beyond documenting shortcuts, recent work has proposed tests to distinguish the encoding of sensitive attributes from actual shortcut reliance. For example, in ~\cite{brown2023detecting}, the authors propose a technique for shortcut testing, showing in clinical tasks that models often encode attributes like age or sex, but that this does not always drive unfair predictions. Complementarily, in~\cite{stanley2025and}, the authors use synthetic neuroimaging data to examine where and how biases are encoded across layers, finding that not all encoded signals are used for the main task. Beyond these works, several post-hoc methods explicitly aim to uncover shortcut behaviour, such as medical imaging studies that 
identify shortcut learning by inspecting the network layer at which a task is learnt in different models~\cite{boland2024there}.

Our framework is aligned with these efforts but differs in being fully post hoc, pixel-space, and model-agnostic: it tests for shortcut-related alignment directly at the dataset level, providing correlational evidence about the extent to which identified cues may be relied upon.

\subsection{Design Desiderata for Shortcut Auditing}
\label{subsection:designdesiderata}
In summary, prior work has demonstrated (i) training for robustness, (ii) comparing internal representations, or (iii) visualising attributions, often qualitatively. However, existing methods have shortcomings when considered in the context of automated model auditing. To highlight this, we establish the following key design desiderata that a practical shortcut auditing method should satisfy:
\begin{enumerate}[label=D\arabic*, leftmargin=1.5em]
    \item \textbf{No shortcut annotations.} The method should not require predefined concept activation vectors, segmentation masks, or manually annotated shortcut regions.
    \item \textbf{Dataset-level analysis.} The method should operate at the dataset level, providing population-level evidence about shortcut reliance rather than relying only on image-by-image inspection.
    \item \textbf{Pixel-space localisation.} The method should localise shortcut-related evidence in pixel space, so that implicated regions can be visualised and related to domain knowledge.
    \item \textbf{Quantitative, model-agnostic metrics.} The method should provide quantitative, statistically interpretable metrics and be applicable to arbitrary architectures and attribution methods.
\end{enumerate}

Viewed through these desiderata, limitations of existing methods become clear. Robustness and data-centric approaches (Section~\ref{subsection:robustnessobjectives}) mitigate shortcuts but lack pixel-space, post-hoc evidence (D3, D4). Attribution-based analyses (Section~\ref{subsection:attributionlocalisation}) support localisation (D3) but rarely provide dataset-level inference or statistical testing (D2, D4). Representation, bias-quantification, and shortcut-testing methods (Sections~\ref{subsection:modelcomparisonrepsim}–~\ref{subsection:shortcuttesting}) yield quantitative diagnostics but operate in latent or feature space and do not deliver a unified, pixel-space, dataset-level audit of shortcut reliance (D3, D4).

OSCAR is a pixel-space, model-agnostic procedure that aggregates attribution maps over disjoint regions, orders aggregated regions, and performs  rank-based correlation tests across a (BA, TS, SA) model triplet. This yields dataset-level decision rules (with $p$-values) for accepting or rejecting shortcut hypotheses and localises the implicated regions, addressing all of desiderata D1-D4.


\section{Methods}

\subsection{Notation \& Setup}
\label{sectLnotationsetup}
The input to our method is a set of training data $\mathcal D = (\mathcal{X}, \mathcal{Y}, \mathcal{A})$, comprising images $\mathcal X$, output labels $\mathcal Y$ and sensitive attribute values $\mathcal A$.

The method relies on three trained models:
\begin{enumerate}
    \item \textbf{Baseline Model (BA)}: A model to classify the target label $\mathcal Y$ from the images $\mathcal X$, trained using a training set from $\mathcal D$ which is  balanced by the sensitive attribute $\mathcal A$.
    \item \textbf{Test Model (TS)}: The model being audited. This is trained to classify the target labels $\mathcal Y$ from the images $\mathcal X$ using training data from $\mathcal D$ that is potentially biased.
    \item \textbf{Sensitive Attribute Model (SA)}: A model trained to classify the sensitive attribute $\mathcal A$ from the images $\mathcal X$. 
\end{enumerate}
The method aims to investigate the hypothesis that TS uses shortcut learning features linked to the sensitive attribute $\mathcal A$.

\subsection{Overview}
\label{subsection:methodoverview}
The overall OSCAR framework is depicted in Figure~\ref{fig:framework}. There are six main steps: (i) training BA, TS and SA models, (ii) estimating attribution maps for each model for each test set sample, (iii) partitioning the maps into disjoint regions to form ranked sets, (iv) aggregating the ranked sets into dataset level rank profiles for each of BA, TS and SA, (v) computing correlations and testing hypotheses using these rank profiles, and (vi) producing region contribution score (RCS) maps to illustrate the shortcut regions. Each of these stages is described in more detail in the subsections below.

\begin{figure*}
\centering
\resizebox{.85\textwidth}{!}{
  \includegraphics[width=\linewidth]{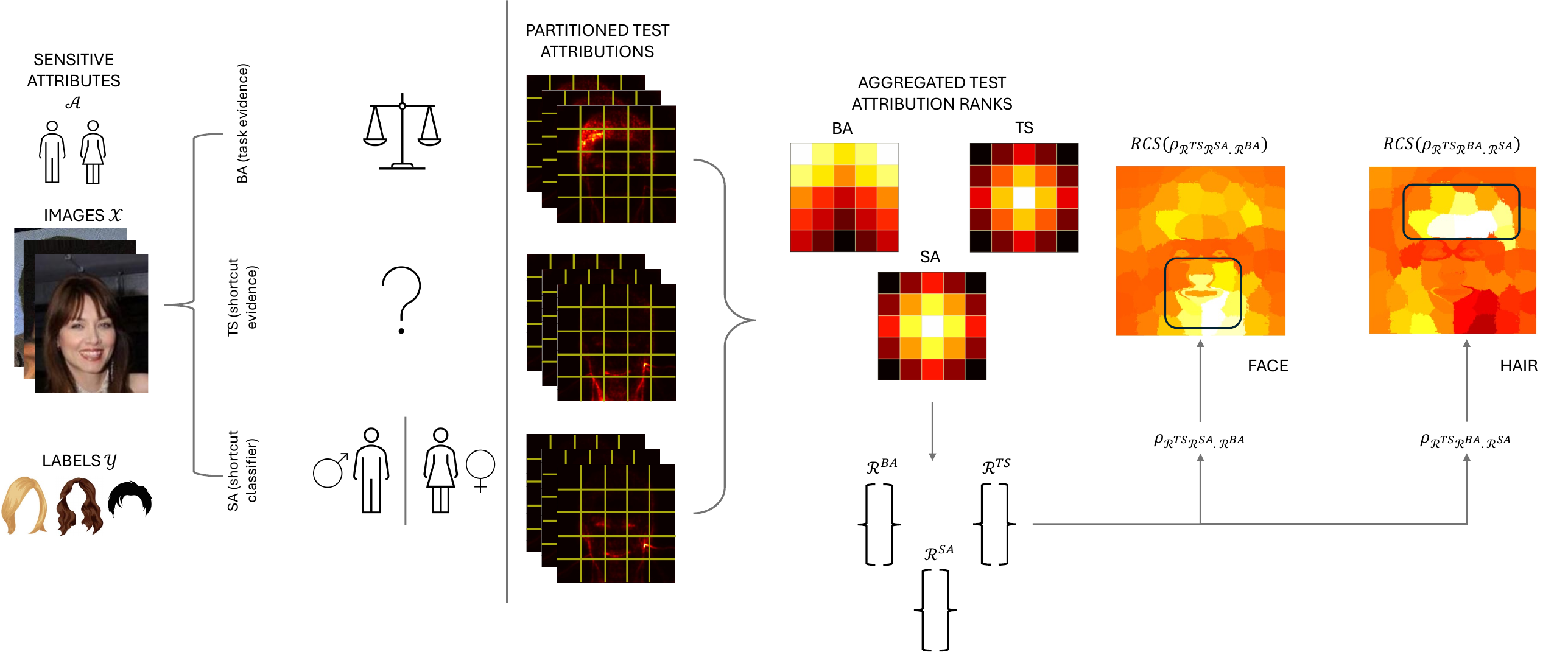}}
  \caption{Overview of the OSCAR framework.  (i) From the input data, three different models are trained: TS, SA and BA (Section ~\ref{subsection:methodstraining}). (ii) These models are used to obtain attribution maps for each test set sample (Section~\ref{subsection:interpretabmethod}). (iii) The attribution maps are partitioned into disjoint regions and the attribution values summarised at the region level to form a ranked set for each image (Section~\ref{subsection:partitions}). (iv) The ranked sets are aggregated across the test set to form dataset level rank profiles $\mathcal R$ for each of TS, SA and BA (Section~\ref{subsection:rankanalysis}). (v) Correlations are computed using the rank profiles and shortcut hypotheses tested (Section~\ref{subsection:hypotheses}). (vi) A corresponding region contribution score (RCS) map is obtained, which indicates the region(s) that contributed to the shortcut learning (Section~\ref{subsection:rcs}).}
  \label{fig:framework}
\end{figure*}

\subsection{BA, TS and SA Model Training}
\label{subsection:methodstraining}

TS is the model being audited and we assume it has already been trained to predict $\mathcal Y$ using a training set comprised of data from $\mathcal D$ with an arbitrary distribution of $\mathcal A$ values. This model may exhibit shortcut learning and is potentially biased by $\mathcal A$, in the sense that the joint distribution of $(\mathcal Y,\mathcal A)$ in its training data can be highly imbalanced (see Table~\ref{tab:pergroup-train} for examples from our experiments), so one $(\mathcal Y,\mathcal A)$ group dominates the others.
To train BA, we curate a training set which is approximately balanced by the sensitive attribute $\mathcal A$, i.e. the empirical label distributions are similar across attribute groups, $\hat P(\mathcal Y \mid \mathcal A = A_0) \approx \hat P(\mathcal Y \mid \mathcal A = A_1)$\footnote{$\hat P(\cdot)$ represents an empirical label distribution.}.
Finally, the training set for SA is the same as that used to train BA but the output labels used are $\mathcal A$ instead of $\mathcal Y$.

\subsection{Estimating Attribution Maps}
\label{subsection:interpretabmethod}
The next stage is to estimate attribution maps for each of BA, TS and SA across the hold-out independent test set from $\mathcal D$.
These are estimated using an interpretability method that is applicable to the neural networks, with inputs $\mathcal X$ and the predicted labels $\mathcal{\hat{Y}}$.
While our framework is agnostic to the choice of the interpretability method, it does require one. The interpretability method produces attributions in pixel space, which we denote by $\mathcal L$.
These are produced for each model for each image in the hold-out test set.

\subsection{Attribution Map Partitioning}
\label{subsection:partitions}

We require that the image $x \in \mathbb{R}^{H \times W}$ (or $\mathbb{R}^{H \times W \times D}$) is partitioned into $n$ mutually exclusive blocks $\mathcal B_r$ where $|\mathcal B_r| > 0$ and $x = \bigsqcup_{r=1}^n \mathcal B_r$. Each block $\mathcal B_r$ in $\mathcal L$ is summarised by a single attribution score $s_r = S(\mathcal L \cap \mathcal B_r)$ using an operator $S$. For example, in our experiments we use the mean operator for $S$. This leads to an image level ordered set $s = \ord(\{s_1,\ldots, s_n\})$ where $\ord$ returns the per-region ranks (one rank per block, $1$ = most important, $n$ = least important).
The rank vectors are computed for each image in the test set.


We propose and investigate three different partitioning techniques to obtain the blocks $\mathcal B_r$: grid-based  partitioning, superpixel partitioning and atlas-based partitioning, as described below.

\begin{itemize}

\item \emph{Grid-based partitioning}:
In this case, we partition each image into a regular grid of fixed-sized blocks $\mathcal B_r$ where $|\mathcal B_r| = |\mathcal B_{r^{\prime}}|\ \forall r,r^{\prime}\in \{1,\ldots,n\}$.

\item \emph{Superpixel partitioning}:
In this case, we partition each image into mutually exclusive blocks $\mathcal{B}_r$ using a superpixel-based approach. We construct a reference edge image $\tilde I$ by averaging Sobel edge maps over the test sets, and apply SLIC (Simple Linear Iterative Clustering)~\cite{achanta2012slic} to $\tilde I$ to obtain a content-aware partition of the pixel grid into $n$ blocks. Formally,
\[
\tilde I = \mathrm{avg\_sobel}\!\left(\{I^{(i)}\}\right), \qquad
\{\mathcal{B}_r\} = \mathrm{SLIC}\!\left(\tilde I;\, n\right),
\]
Here, $i\in\{1,\ldots,m\}$ is the image index.

\item \emph{Atlas-based partitioning}:
In this case, we partition each image using a predefined semantic or anatomical atlas. Each block $\mathcal B_r$ corresponds to all pixels assigned to atlas region $r$, and the blocks are mutually exclusive but not necessarily equal in size. This option provides region definitions that are anatomically meaningful and consistent across images.

\end{itemize}

\subsection{Forming Aggregated Rank Profiles}
\label{subsection:rankanalysis}


We now combine (aggregate) the image level ordered sets $s$ for each image in the hold-out test set to form a single dataset level rank profile for each model. This profile summarises the regions generally used by the model for inference.

Here, we assume ranked sets $s_i^M$ are obtained for each image $x_i$ and model $M \in \{BA,TS,SA\}$. That is, for each image we have three ranked sets $s_i^{TS}, s_i^{BA}\ \text{and}\ s_i^{SA}$, which contain the ordered block indices based on attribution importance. These ranked sets are aggregated over all images in the hold-out test set by an operator such as the median, i.e. $\mathcal R^M = \operatorname{median} (s_i^M)$. This results in three dataset-level ranked sets of regions, which we denote by $\mathcal R^{TS} , \mathcal R^{BA}\ \text{and}\ \mathcal R^{SA}$.

\subsection{Correlations and Hypothesis Testing}
\label{subsection:hypotheses}

The ranked sets $\mathcal R^{TS} , \mathcal R^{BA}\ \text{and}\ \mathcal R^{SA}$ are used to compute interpretable statistics, specifically pairwise correlation, partial correlation and deviation-based correlation.




Below we present a general formulation of the correlations we propose, using $(A,B,C)$ to represent the three models.
In practice, one typically sets
$(A,B,C)=(\text{TS},\text{SA},\text{BA})$, i.e., $A$ is the potentially biased test model, $B$ predicts the sensitive attribute, and $C$ is the baseline model. The statements below, however, are written generically and hold for any $(A,B,C)$.

Let $\mathcal{R}^{M}\in\mathbb{R}^{n}$ denote the
test-set rank profile over the $n$ regions for model $M\in\{A,B,C\}$ as constructed in Section~\ref{subsection:rankanalysis}. Let $\rho(\cdot,\cdot)$ denote a correlation between two such region-wise rankings.
Intuitively, a larger $\rho$ means the two models order regions more similarly.

\begin{definition}[Partial correlation on aggregated ranks]
\label{def1:pcorr}
Let $\mathbf e_A$ and $\mathbf e_B$ be the residual vectors after
regressing $\mathcal R^A$ and $\mathcal R^B$ on $\mathcal R^C$ (with an intercept).
We define the partial correlation as the Pearson's correlation\footnote{Pearson's correlation is used for the main experiments. Justification and alternate choices are discussed in Appendix \ref{appendix:salscoreagg}.} between
$\mathbf e_A$ and $\mathbf e_B$:
\[
\rho_{AB\cdot C}
=
\frac{\sum_{r=1}^n (e_{A,r} - \bar e_A)(e_{B,r} - \bar e_B)}
{\sqrt{\sum_{r=1}^n (e_{A,r} - \bar e_A)^2}\;
 \sqrt{\sum_{r=1}^n (e_{B,r} - \bar e_B)^2}}.
\]
\end{definition}

Based on the above definition, we now define three different measures of correlation on the rank profiles of the models $(A,B,C)$.

\begin{enumerate}[label=\arabic*., leftmargin=1.2em, labelsep=0.5em]

  \item \textbf{Pairwise correlation - agreement: $\boldsymbol{\rho(\mathcal{R}^{A},\,\mathcal{R}^{B})}$.}
  \begin{enumerate}[label=\alph*)]
    \item \textbf{Null hypothesis.} The region-wise rankings of $A$ and $B$ are not
    associated (``$A$ and $B$ do not rank regions in a similar or opposite manner'').
    \item \textbf{Alternate hypothesis.} $A$ and $B$ rank regions in a similar or opposite manner.
  \end{enumerate}
  \textit{Interpretation.} Rejecting the null hypothesis indicates that $A$ and $B$ share a common
  region-wise ordering of importance. With $(A,B)=(\text{TS},\text{SA})$ this supports the hypothesis that
  the test model uses similar evidence to the sensitive attribute model.

  \medskip

  \item \textbf{Partial correlation - shared structure beyond a reference: $\boldsymbol{\rho_{\mathcal{R}^{A}\mathcal{R}^{B}.\,\mathcal{R}^{C}}}$}
  \begin{enumerate}[label=\alph*)]
    \item \textbf{Null hypothesis.} Any apparent similarity between $A$ and $B$ is explained by
    their common similarity to $C$ (``no additional shared structure beyond $C$'').
    \item \textbf{Alternate hypothesis.} $A$ and $B$ share region-wise structure that is \emph{not}
    explained by $C$.
  \end{enumerate}
  \textit{Interpretation.} This asks whether $A$ and $B$ agree in the \emph{residual}
  ordering once the structure captured by $C$ is factored out. With
  $(A,B,C)=(\text{TS},\text{SA},\text{BA})$, rejecting the null hypothesis indicates that the
 TS and SA model attribution alignment is not merely inherited from BA.

  \medskip
\item \textbf{Deviation-based correlation - $\boldsymbol{\rho_{\text{dev}}}$}.
Here, we are interested in whether the way model $A$ departs from a reference model $C$
is aligned with the region-wise ordering of a third model $B$.
Formally, we compute the correlation where
we first regress $\mathcal R^{A}$ on $\mathcal R^{C}$ (with an intercept)
and take the residuals
$
e_{A} = \mathcal R^{A} - \widehat{\mathcal R^{A}}(\mathcal R^{C}),
$
then define
\[
\rho_{\text{dev}} = \mathrm{Corr}\bigl(e_{A},\, \mathcal R^{B}\bigr).
\]
\begin{enumerate}[label=\alph*)]
  \item \textbf{Null hypothesis.} The residual deviations of $A$ from $C$ are not associated
  with the rankings of $B$.
  \item \textbf{Alternative hypothesis.} The residual deviations of $A$ from $C$ are
  aligned with the rankings of $B$.
\end{enumerate}
\textit{Interpretation.} With $(A,B,C)=(\text{TS},\text{SA},\text{BA})$, a large positive
$\rho_{\text{dev}}$ indicates that the part of TS's rank profile that cannot be explained
by BA follows the region-wise ordering of SA. We refer to this as a deviation-based correlation since it quantifies how TS deviates from BA in the direction of SA. Compared to the partial correlation, only $A$ is residualised on $C$, while $B$ is left in its original scale.

\end{enumerate}

\noindent\textbf{Assumptions}\\ 
In the above formulation we make the following assumptions:
\begin{enumerate}
    \item[*] Images are approximately spatially aligned so regions are comparable across samples. 
    \item[*] Balancing the training data for BA with respect to $\mathcal A$ reduces the incentive to exploit shortcuts linked to $\mathcal A$. Note that we do not assume that BA is completely free of shortcut behaviour or sensitive attribute encoding. Throughout this work, we interpret BA as an approximate ``reference'' model: partial and deviation-based correlations quantify how TS aligns with SA beyond what can be explained by this reference, rather than as a formal proof that all remaining alignment must be shortcut-related.

\end{enumerate}

\noindent\textbf{Hypothesis Testing}\\
We assess significance using two complementary procedures.  
First, we compute two-sided permutation $p$-values based on 10{,}000 region-index permutations of the aggregated rank profiles.  
Second, we quantify dataset-level variability via image-level bootstrap resampling
(10{,}000 resamples), from which we report 95\% confidence intervals.

\subsection{Region Contribution Scores} 
\label{subsection:rcs}
Finally, having computed the correlations and tested shortcut hypotheses, we now produce interpretable maps which indicate the nature and locality of the shortcut features used by TS. We refer to these maps as Region Contribution Score (RCS) maps.


To produce the RCS maps, we compute an RCS value per region $r$ as the signed contribution to the partial correlation numerator.  High magnitude regions indicate dominant contributors to the measured correlation, and hence to the detected shortcut related alignment (positive magnitude).

Formally, let $\mathbf e_A$ and $\mathbf e_B$ be the residual vectors after regressing
$\mathcal R^{TS}$ and $\mathcal R^{SA}$ on $\mathcal R^{BA}$.
We form z-scored residuals
\[
   z_{A,r} = \frac{e_{A,r} - \bar e_A}{\sigma_A}, \qquad
   z_{B,r} = \frac{e_{B,r} - \bar e_B}{\sigma_B},
\]
and define
\[
   \mathrm{RCS}(r) = z_{A,r} z_{B,r}.
\]
Since
\(
   \rho_{TS,SA.BA} = \frac{1}{n-1} \sum_r z_{A,r} z_{B,r},
\)
each $\mathrm{RCS}(r)$ is proportional to that region’s contribution to the partial correlation. Additionally, we apply an $\ell_1$-normalisation so that $\sum_r |\mathrm{RCS}(r)| = 1$. This preserves the sign of each region’s effect while placing magnitudes on a common scale across regions.

The overall rank aggregation and correlation computation methods are also summarised in Algorithm~\ref{alg:audit}. 

\begin{algorithm}[t]
\caption{Aggregated Rank Partial Correlation}
\label{alg:audit}
\begin{algorithmic}[1]
\STATE Input: models $M\in\{\text{BA},\text{TS},\text{SA}\}$, attribution method $\mathcal{L}$, partition $\{\mathcal B_r\}_{r=1}^n$
\FOR{image $i=1,\dots,m$}
  \FOR{each model $M \in \{\text{BA},\text{TS},\text{SA}\}$}
    \STATE Compute attribution map $\mathcal L_i^M$ for $x_i$
    \FOR{region $r=1,\dots,n$}
      \STATE $s_{i,r}^M \leftarrow S(\mathcal L_i^M \cap \mathcal B_r)$ \quad \COMMENT{e.g., mean}
    \ENDFOR
    \STATE $\mathbf s_i^M \leftarrow \ord(s_{i,1}^M,\dots,s_{i,n}^M)$
  \ENDFOR
\ENDFOR
\FOR{each model $M \in \{\text{BA},\text{TS},\text{SA}\}$}
  \STATE $\mathbf R^M \leftarrow \operatorname{median}_i (\mathbf s_i^M)$
\ENDFOR
\STATE $\rho_{\text{TS},\text{SA}.\text{BA}} \leftarrow \mathrm{pcorr}(\mathbf R^{\text{TS}},\mathbf R^{\text{SA}} \mid \mathbf R^{\text{BA}})$
\STATE RCS$(r) \leftarrow$ signed contribution via residual product
\STATE \textbf{return} $\rho$, permutation $p$, confidence interval (CI) and RCS map
\end{algorithmic}
\end{algorithm}

Theoretical properties are discussed in~\appref{appendix:theory}.

\section{Experimental Details}
\label{section:experiments}

\subsection{Datasets \& Preprocessing}

We evaluate OSCAR on two-dimensional (2D) natural images, 2D medical images (chest X-rays) and three-dimensional (3D) medical images (brain MRI). In every setting, the target is a binary label $\mathcal{Y}\in\{0,1\}$ and the sensitive attribute is a binary variable $\mathcal{A}\in\{0,1\}$. All train/validation/test splits in the medical imaging datasets are patient/subject-wise disjoint. The specific datasets, tasks and sensitive attributes used in our experiments are described below and the data distributions are detailed in Tables~\ref{tab:datasets} and~\ref{tab:pergroup-train}.


\subsubsection{CelebA}
\label{sec:celeba}

The CelebA dataset~\cite{liu2015deep} consists of natural 2D images of faces and we study the binary target label of \emph{Hair} (0 = non-blond, 1 = blond) and the sensitive attribute of \emph{Gender} (0 = female, 1 = male). We centre-crop all images to the face region, resize to $224{\times}224$, and standardise images based on the ImageNet statistics.
We introduce an association between female gender and blond hair in the TS model, whilst BA and SA are balanced with respect to hair/gender composition, as shown in Tables~\ref{tab:datasets} and~\ref{tab:pergroup-train}.

\subsubsection{CheXpert}
\label{sec:chexpert}

The CheXpert dataset~\cite{irvin2019chexpert} consists of 2D chest X-ray images. We study the clinical target label of \emph{Pleural Effusion} (0 = absent, 1 = present) and the sensitive attribute of \emph{Sex} (0 = female, 1 = male). We restrict our dataset to AP (anterior-posterior) frontal views and exclude PA (posterior-anterior) and lateral images resized to $224\times224$.
Stratification is performed by age, i.e. subjects are proportional based on age (subjects $< 45$ are categorised as young and others as old)  across the train, validation and test splits. In the CheXpert experiments, the association introduced into TS is between pleural effusion and sex, whilst the BA and SA are balanced with respect to pleural effusion/sex composition.

\subsubsection{ADNI}
\label{sec:adni}

The ADNI dataset~\cite{petersen2010alzheimer} consists of 3D brain MRI images acquired from healthy controls, and subjects with mild cognitive impairment and Alzheimer's disease. We used the clinical target label of \emph{Alzheimer's Disease} (0 = cognitively normal, 1 = Alzheimer's disease present) and the sensitive attribute of \emph{Sex} (0 = female, 1 = male).
We resize the T1-weighted skull-stripped (using~\cite{isensee2019automated,isensee2021nnu}) volumetric inputs to size $256\times 256\times 256$ and perform z-score standardisation. The data splits were stratified based on age and race, i.e. the number of subjects in the train, validation and test sets were proportional based on race and age. In the ADNI experiments the association introduced into the TS model was between Alzheimer's disease and sex, whilst the BA and SA models are trained on subject sets that are significantly more balanced (compared to TS) with regard to Alzheimer’s disease × sex at the subject level. Due to the limited number of available subjects in the minority groups, it was not possible to construct both a heavily skewed TS split and a perfectly balanced BA split with exactly the same number of training subjects. BA is therefore constructed to be as balanced as possible under these constraints, which leads to a different overall training set size than TS.

\begin{table*}[!t]
  \centering
  \caption{Summary of datasets used in the study. For CelebA and CheXpert, the train/validation/test entries give the number of images. For ADNI, the train and validation columns show the number of scans used for the baseline (BA) and test (TS) models in the format \textit{BA $\lvert$ TS}; the test column reports the number of subjects in the shared test set. PF=Pleural effusion. CN=Cognitively normal. AD=Alzheimer's disease.}
  \label{tab:datasets}
  \setlength{\tabcolsep}{6pt}    
  \renewcommand{\arraystretch}{1.15} 
  \begin{tabular}{l l l r r r}
    \toprule
    \textbf{Dataset} &
    \textbf{$\mathcal Y$ (0/1)} &
    \textbf{$\mathcal A$ (0/1)} &
    \textbf{train} &
    \textbf{val} &
    \textbf{test} \\
    \midrule
    CelebA -- Hair $\times$ Gender &
    \makecell[l]{0 = non-blond\\1 = blond} &
    \makecell[l]{0 = female\\1 = male} &
    10{,}000 & 500 & 1{,}000 \\
    CheXpert -- PF $\times$ Sex &
    \makecell[l]{0 = no PF\\1 = PF} &
    \makecell[l]{0 = female\\1 = male} &
    10{,}000 & 500 & 1{,}000 \\
    ADNI -- Alzheimer's Disease $\times$ Sex &
    \makecell[l]{0 = CN\\1 = AD} &
    \makecell[l]{0 = female\\1 = male} &
    1{,}289 $\lvert$ 1{,}347 & 149 $\lvert$ 161 & 502 \\
    \bottomrule
  \end{tabular}
\end{table*}

\begin{table}[!ht]
  \centering
  \caption{Per-group training counts for the baseline (BA) and test (TS) models. Columns list counts for $(y,a)\in\{(0,0),(0,1),(1,0),(1,1)\}$, where $y$ is the target label and $a$ the sensitive attribute. For CelebA and CheXpert, entries are image counts. For ADNI, entries are given as \textit{scans(subjects)}, i.e. the number of scans followed in parentheses by the corresponding number of MRI subjects in that group.}

  \label{tab:pergroup-train}
  \small
  \setlength{\tabcolsep}{4pt}
  \renewcommand{\arraystretch}{1.12}
  \begin{tabular}{@{} l @{\hspace{1.2em}} l r r r r r}
    \toprule
    \textbf{Dataset} & \textbf{Train set} & \textbf{(0,0)} & \textbf{(0,1)} & \textbf{(1,0)} & \textbf{(1,1)} \\
    \midrule
    \multirow{2}{*}{CelebA}
      & BA & 2{,}500 & 2{,}500 & 2{,}500 & 1{,}374\tablefootnote{This is the maximum number of Male+Blond samples available for the training set.} \\
      & TS   & 25 & 4{,}975 & 4{,}975 & 25 \\
    \addlinespace
    \multirow{2}{*}{CheXpert}
      & BA & 2{,}500 & 2{,}500 & 2{,}500 & 2{,}500 \\
      & TS   & 25 & 4{,}975 & 4{,}975 & 25 \\
    \addlinespace
    \multirow{2}{*}{ADNI}
      & BA & 433(86) & 449(88) & 182(57) & 225(72) \\
      & TS   & 859(174) & 21(4) & 8(2) & 459(140) \\
    \bottomrule
  \end{tabular}
\end{table}

\subsection{Model Training}


To train the BA, TS and SA models (see Section~\ref{subsection:methodstraining}), the following neural network architectures were used: ResNet50~\cite{he2016deep} and InceptionV3~\cite{szegedy2016rethinking} for the 2D experiments, and 3D-ResNet50 for the 3D experiment. We did not evaluate InceptionV3 in the 3D experiment as it has no standard 3D implementation. We also provide some additional results using VGG16~\cite{simonyan2014very} and MobileNetV3-Large~\cite{howard2019searching} in ~\appref{appendix:interpretabilitymethods}. In addition, we evaluate a Vision Transformer~\cite{dosovitskiy2020image} (ViT) with Attention-aware LRP~\cite{pmlr-v235-achtibat24a}, and show that OSCAR extends naturally to transformer architectures. See \appref{appendix:transformermodels} for full details and results.

All models were trained using the Adam optimiser with a learning rate of $5e-5$. The batch size was $32$ for the 2D models and $4$ with gradient accumulation for the 3D models. Early stopping based on the validation F1-score was used, with a patience of $10$ for the 2D experiments and $50$ for the 3D experiments. We initialised the networks with pretrained ImageNet weights and used a dropout rate of $0.2$ for all networks.

Further training, attribution layer targets, and optimisation settings are detailed in~\appref{appendix:computationaldetails}.

\subsection{Evaluation}
\label{subsection:evaluation}

Unless stated, all evaluation metrics are reported on a test set that is balanced in $\mathcal{A}$ within each class of $\mathcal{Y}$. Exact counts and skew levels are provided per dataset in Tables~\ref{tab:datasets} and~\ref{tab:pergroup-train}.

To compare the BA and TS models
we report worst group accuracy,
defined as the lowest accuracy over the four
$(\mathcal Y,\mathcal A)$ subgroups (i.e., the target–attribute pairs
$(0,0)$, $(0,1)$, $(1,0)$, $(1,1)$), as this is related to the robustness of the models to spurious correlations. This serves as a simple performance-based
baseline for shortcut behaviour: if the TS model relies on the
$\mathcal{Y}$–$\mathcal{A}$ shortcut, its worst-group accuracy on the balanced test
set tends to be substantially lower than that of the balanced BA model.

It is common to quantify how strongly sensitive attributes are encoded in models by training simple classifiers on intermediate representations~\cite{song2019overlearning,stanley2025and}. 
Following this practice, we train a logistic regression probe on the TS penultimate-layer features to predict $\mathcal{A}$ on the balanced test set and report leakage results in Section~\ref{subsection:shortcutdetection}.

\subsection{Attribution Maps}
\label{attributionmaps}

We used a Grad-CAM \cite{selvaraju2017grad} layer positioned at the final convolution block as the default attribution technique. For the 3D experiments, we used a volumetric Grad-CAM variant. Attribution maps were ReLU-thresholded and upsampled to image/volume resolution. We $\ell_1$-normalised all maps per image before further processing. We additionally provide results using LRP in~\appref{appendix:interpretabilitymethods}.

\subsection{Partitions} 
\label{partitions_refined}


For the grid-based partitioning, we used block sizes of $8\times8$ and $16\times16$. This partitioning method was evaluated on all 2D experiments.

For the superpixel partitioning. we used $k\in\{64,256\}$ for the experiments. Here, $k$ represents the number of regions in the partition.
The superpixel partitioning was also evaluated on all 2D experiments.

The atlas-based partitioning can be utilised whenever there is a pre-defined atlas available for the dataset. In our experiments, this was the case for the 3D brain MRI data. We employed the Hammersmith Atlas Database~\cite{hammers2003three}, in its 95 region version\footnote{\url{https://brain-development.org/brain-atlases/adult-brain-atlases/}}. All test images were registered to the atlas and the resulting transformations were applied to the attribution maps.

\subsection{Region Statistic Rank Profiles} 
\label{rankprofiles}

Within each region, we summarize attributions by a region-level statistic $S$ (see Section~\ref{subsection:partitions}). In our experiments, we used the mean attribution per region for $S$. However, alternatives such as saliency score~\cite{stanley2022fairness} can also be used. We evaluate a saliency score aggregation variant in~\appref{appendix:salscoreagg}.

\section{Results}

\subsection{Shortcut Detection using Classification Performance}
\label{subsection:shortcutdetection}
\begin{figure}[t]
\centering
\resizebox{0.4\textwidth}{!}{
  \includegraphics[width=\linewidth]{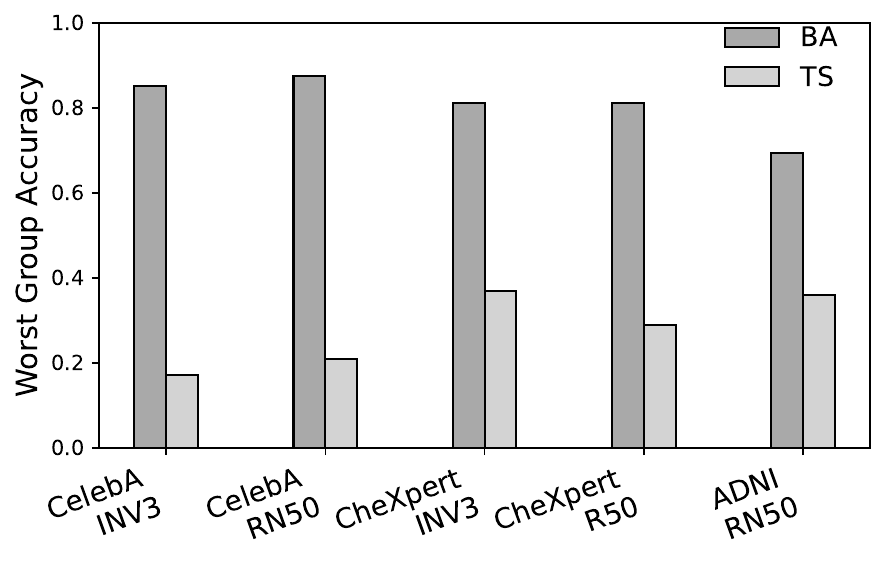}}
  \caption{Worst group accuracy for classification of target labels. RN50 stands for ResNet50 and INV3 for InceptionV3.}
  \label{fig:WGA}
\end{figure}
First, we report the quantitative accuracy of the TS and BA models on the balanced test sets for all datasets/tasks to ascertain whether shortcut learning is taking place. Figure~\ref{fig:WGA} presents the worst-group accuracies, i.e., the minimum accuracy
over the four $(\mathcal Y,\mathcal A)$ groups
 of the BA and TS models for all three datasets/tasks. The differences in worst group accuracy indicate bias against minority groups that is consistent with shortcut learning in the TS models. The highest disparities are seen for the CelebA and CheXpert datasets. The sensitive attribute classification performance using the SA models is reported in~\appref{appendix:sensitiveattributeclass}.

 We report the AUCs for sensitive attribute leakage from the TS model features in Table~\ref{tab:leakage_auc}. The probe
AUC quantifies how strongly $\mathcal{A}$ is encoded
in the representation.  We observe consistently high leakage that
decreases systematically (with negligible impact for CelebA)  as the number of discordant pairs increases, indicating
that increasing the $(\mathcal Y,\mathcal A)$ balance in the training set reduces but does not eliminate the linear decodability
of $\mathcal{A}$. We also compute leakage AUC for the balanced baseline (BA). Interestingly, BA exhibits similar attribute-leakage AUC to the TS models, despite having substantially higher worst group accuracy and much lower OSCAR partial correlation (see Section~\ref{varyinganticorr}). This supports that representation-level leakage is not a reliable indicator of shortcut reliance~\cite{brown2023detecting}, and also supports our claim that OSCAR’s correlations and group robustness metrics capture a fundamentally different phenomenon.

\begin{table}[t]
\centering
\caption{Linear probe AUC for predicting the sensitive attribute $A$ from penultimate-layer features of the biased task model (TS) on the balanced test set. We vary the number of discordant $(Y,A)$ pairs in the training data. Higher AUC indicates stronger attribute leakage.}
\label{tab:leakage_auc}
\resizebox{0.45\textwidth}{!}{
\begin{tabular}{lcccccc}
\toprule
\multirow{2}{*}{Dataset / Backbone} & \multicolumn{6}{c}{\# discordant $(Y,A)$ pairs in training} \\
\cmidrule(lr){2-7}
& 25 & 500 & 1000 & 1500 & 2000 & 2500 (BA)\\
\midrule
CelebA / ResNet50        & 0.982 & 0.985 & 0.978 & 0.977 & 0.975 & 0.972\\
CheXpert / ResNet50      & 0.924 & 0.917 & 0.901 & 0.899 & 0.908 & 0.909\\
CelebA / InceptionV3     & 0.944 & 0.955 & 0.952 & 0.945 & 0.959 &0.942\\
CheXpert / InceptionV3   & 0.877 & 0.817 & 0.792 & 0.780 & 0.773 & 0.758\\
ADNI / ResNet50   & 0.793 & 0.778 & 0.801 & 0.739 & 0.717 & 0.756 \\
\bottomrule
\end{tabular}}
\end{table}

We also observe that removing the sensitive attribute direction from the TS classifiers does not affect the classification performance (see~\appref{appendix:sensitiveattributeclass}).  

\begin{keytakeaway}[]
\textbf{Conclusion}: Group-level performance disparities are present in our experiments but leakage of the sensitive attribute from representations is not sufficient on its own to demonstrate shortcut reliance.
\end{keytakeaway}

\subsection{Correlations}
\label{subsec:correlations}
Next, we provide the partial correlation values between TS and SA after conditioning on the BA model as discussed in Section~\ref{subsection:hypotheses}. Table~\ref{tab:rankcorr} shows these values for the three datasets/tasks, the two model architectures and the three partitioning methods. Note that the atlas-based partitioning is only available for the ADNI experiments. It can be seen that all correlations are moderate to high across datasets and partitions, indicating agreement between the TS and SA models beyond what can be explained by BA.

Pairwise and deviation-based correlation results are reported for the experiments presented in Sections~\ref{varyinganticorr} and~\ref{multiseedcomp}.

\begin{table}[htpb!]
  \centering
  \caption{Partial correlations
($\boldsymbol{\rho_{\mathcal{R}^{TS}\mathcal{R}^{SA}.\,\mathcal{R}^{BA}}}$)
  by dataset, model, and partitioning approach.}
  \label{tab:rankcorr}
  \setlength{\tabcolsep}{6pt}
  \renewcommand{\arraystretch}{1.2}
  \begin{tabular}{
    l l
    S
    S S
    S S
  }
    \toprule
    & & \multicolumn{1}{c}{\textbf{Atlas}}
      & \multicolumn{2}{c}{\textbf{Grid}}
      & \multicolumn{2}{c}{\textbf{Superpixel}} \\
    \cmidrule(lr){3-3} \cmidrule(lr){4-5} \cmidrule(lr){6-7}
    \textbf{Dataset} & \textbf{Model}
      & {—} & {8×8} & {16×16} & {64} & {256} \\
    \midrule
    CelebA & ResNet50 & N/A & 0.96 & 0.96 & 0.97 & 0.96 \\
              & InceptionV3 & N/A & 0.85 & 0.85 & 0.86 & 0.85 \\
    \midrule
    CheXpert & ResNet50 & N/A & 0.90 & 0.91 & 0.91 & 0.92 \\
              & InceptionV3 & N/A & 0.76 & 0.79 & 0.80 & 0.79 \\
    \midrule
    ADNI & ResNet50 & 0.89 & N/A & N/A & N/A & N/A \\
    \bottomrule
  \end{tabular}
\end{table}

\begin{keytakeaway}[]
\textbf{Conclusion}: OSCAR’s partial correlations are strongly positive across datasets and partitions, indicating consistent TS$\leftrightarrow$SA alignment beyond what is explained by BA.
\end{keytakeaway}

\subsection{Sensitivity Analysis}
\label{varyinganticorr}

The previous section showed that the partial correlation can identify when shortcut learning is taking place. In this section we perform a sensitivity analysis to show that it can also identify when shortcut learning is not present. To validate this, we systematically vary the number of `discordant' pairs $(\mathcal{Y},\mathcal{A})$ in the training set (i.e. minority pairs in the training data).
Specifically, in Table~\ref{tab:pergroup-train}, the number of discordant pairs was fixed to $25$ for each minority group for the 2D datasets. Here, we increased the discordant samples per group to $500 \to 1000 \to 1500 \to 2000$, so the total number of discordant samples across the two minority groups was $10\% \to 20\% \to 30\% \to 40\%$ of the training set. These increments effectively move the training datasets towards the BA training set where we would not expect shortcut learning to occur. 

\begin{figure}[htpb!]
\centering
\includegraphics[width=\linewidth]{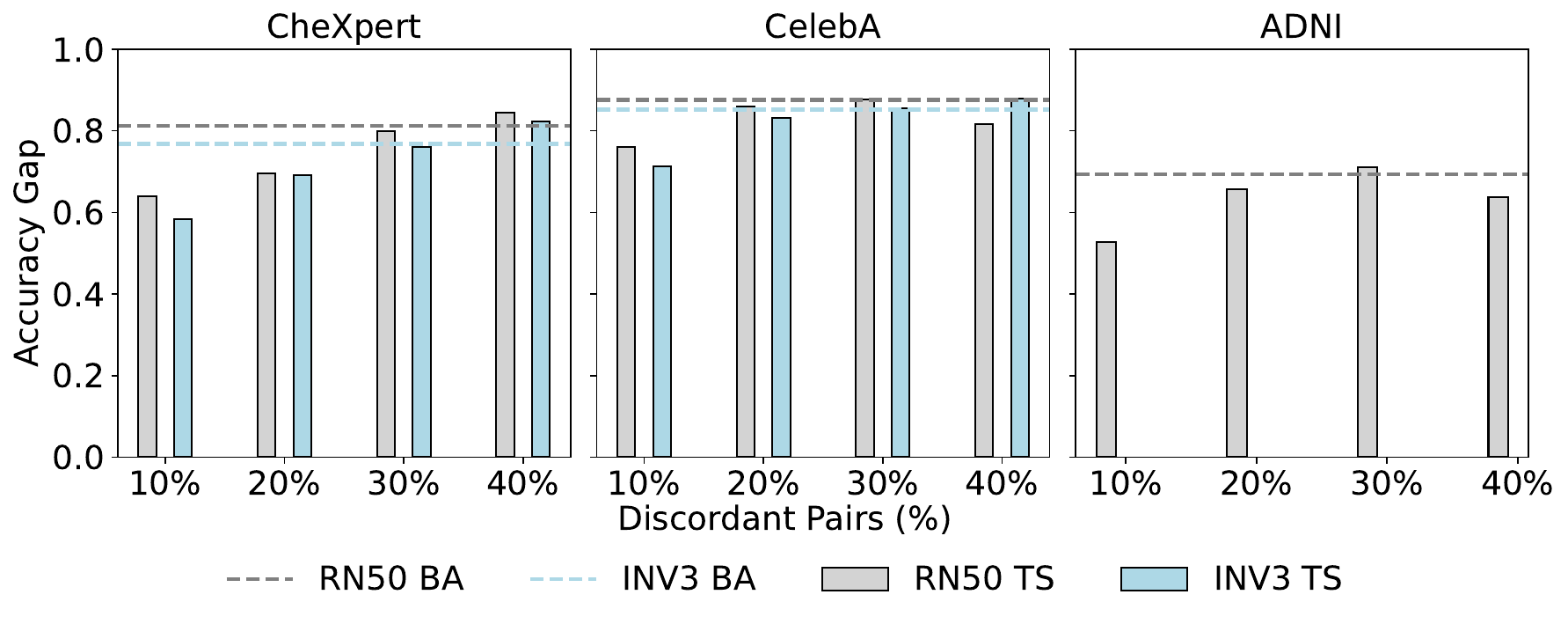}
  \caption{Worst group accuracy over varying number of discordant pairs for CelebA, CheXpert and ADNI datasets.}
  \label{fig:BAVAR}
\end{figure}

\begin{figure*}[htpb!]
\centering
\resizebox{\textwidth}{!}{
  \includegraphics[width=\linewidth]{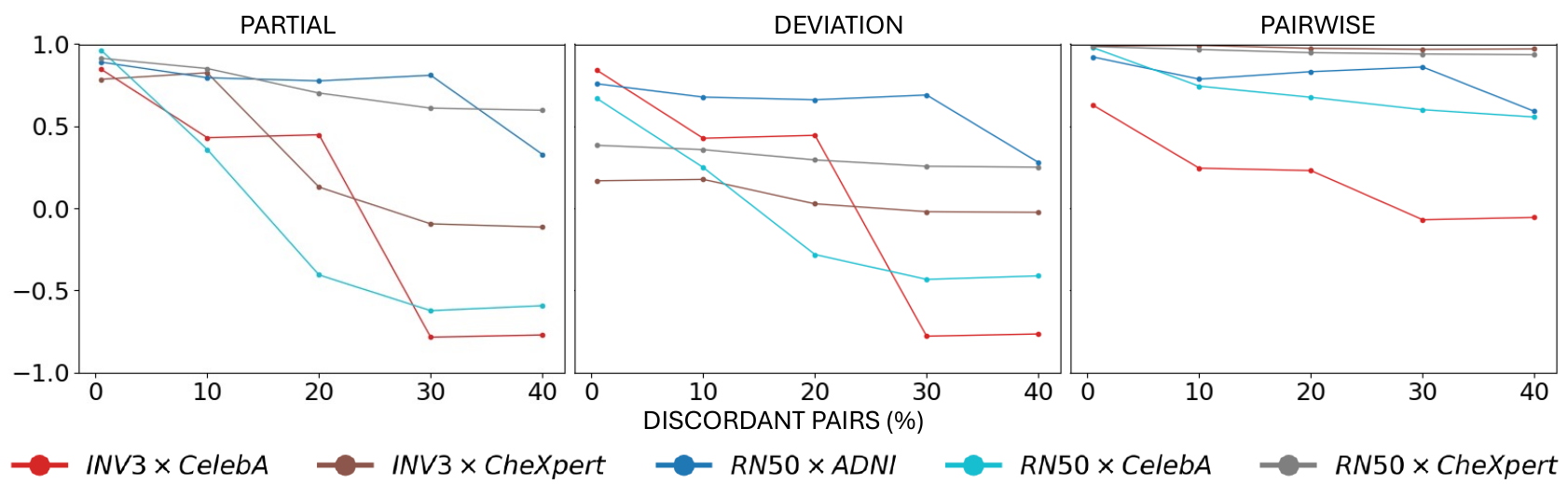}}
  \caption{Partial $\rho_{\mathcal{R}^{TS}\mathcal{R}^{SA}.\mathcal{R}^{BA}}$, deviation-based $\rho_{dev\ \mathcal{R}^{TS}\mathcal{R}^{SA}.\mathcal{R}^{BA}}$ and pairwise $\rho(\mathcal{R}^{TS}, \mathcal{R}^{SA})$ correlations over varying numbers of discordant pairs for CelebA, CheXpert and ADNI datasets using Grad-CAM attributions and $16\times 16$ grid partitioning. RN50 and INV3 stand for ResNet50 and InceptionV3 respectively.}
  \label{fig:shorttrends}
\end{figure*}

We observe in Figure~\ref{fig:BAVAR} that the worst group accuracy increases as the number of discordant training pairs increases, indicating that the shortcut learning effect is becoming less pronounced. Note that, in these plots, the disparity is measured between each bar and its corresponding baseline performance which is shown as a dotted line.

Figure~\ref{fig:shorttrends} shows the partial, deviation-based and pairwise correlation values for the same models. We can see that as the number of discordant pairs increases, the partial correlation values drop. Interestingly, for CelebA, as the number of discordant pairs increases, the partial and deviation-based correlations transition from strongly positive to negative, whereas for CheXpert they move towards zero. These negative correlations indicate active counter-alignment between TS and SA after controlling for BA. This behaviour is consistent with more spatially localised sensitive attribute features having contributed to shortcut-related behaviour. On CheXpert, the reduction in partial/deviation-based correlations (without becoming strongly negative) instead suggests a more diffuse or global sensitive attribute signal.
 Breaking the shortcut learning by balancing the training set reduces the alignment without inducing a spatially consistent counter-pattern. These findings are consistent with what we know of the sensitive attribute features in these two datasets: in CelebA, gender features are likely to be localised in specific areas of the face, whereas in CheXpert the sex-related features may be partially linked to the shadowing caused by breast tissue, which is spread across the image. However, these are human aligned explanations which the models may not follow.

 Overall, we find the partial correlation to be the most sensitive to shortcut learning. The pairwise correlations between TS and SA are less sensitive to the changes in the training dataset composition and are therefore less informative about the shortcut behaviour of the models.
The deviation-based correlation is more informative than pairwise correlation. However, for CheXpert, we do not observe significant changes in deviation-based correlation across varying numbers of discordant pairs, which could be due to the diffuse nature of the shortcut cues in CheXpert (as discussed above). These diffuse cues are detected better by the partial correlation.


\begin{keytakeaway}[]
\textbf{Conclusion}: The partial correlations decrease as discordant $(\mathcal{Y},\mathcal{A})$ pairs are added, capturing changes in shortcut strength as the training distribution is rebalanced.
\end{keytakeaway}

\subsection{RCS Analysis}

We also provide a qualitative illustration of how RCS maps change as the number of discordant pairs increases.
Figure~\ref{fig:RCS} shows RCS maps for the CelebA and CheXpert datasets for both ResNet50 and InceptionV3 model architectures. 
It can be seen how the CelebA models featuring shortcut learning (on the left of the figure) have RCS maps that strongly highlight gender-related features such as the lower part of the face. As the spurious correlations are reduced (moving to the right in Figure~\ref{fig:RCS}), the maps tend to highlight the hair more.
For CheXpert there is lower spatial concentration of features, reflecting the more diffuse nature of the shortcut features.

For the brain MRI dataset, RCS maps concentrate in partition cells that plausibly encode sex-related morphometric differences (see Figure~\ref{fig:adni_srel} and~\appref{appendix:subsection:ADNI}).

\begin{figure*}[htpb!]
\centering
\subfloat[CelebA, Grid ($16\times 16$)\label{fig:RCSanticorrresceleb}]{
  \includegraphics[width=0.75\linewidth]{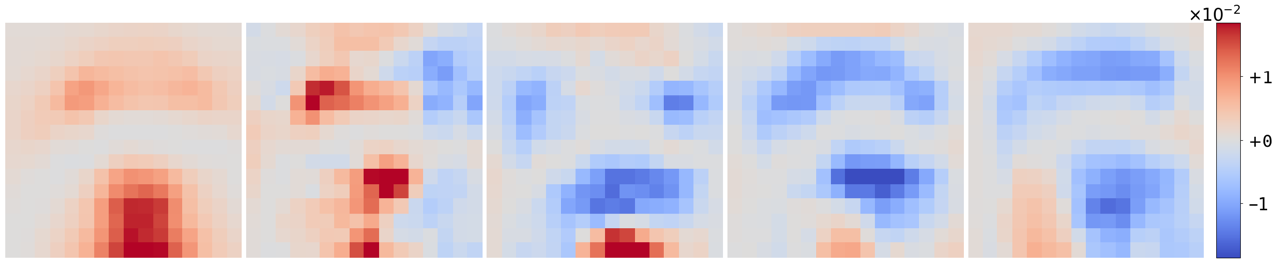}}\hfill
\subfloat[CelebA, Superpixel (k=64)\label{fig:RCSanticorrincceleb}]{
  \includegraphics[width=0.75\linewidth]{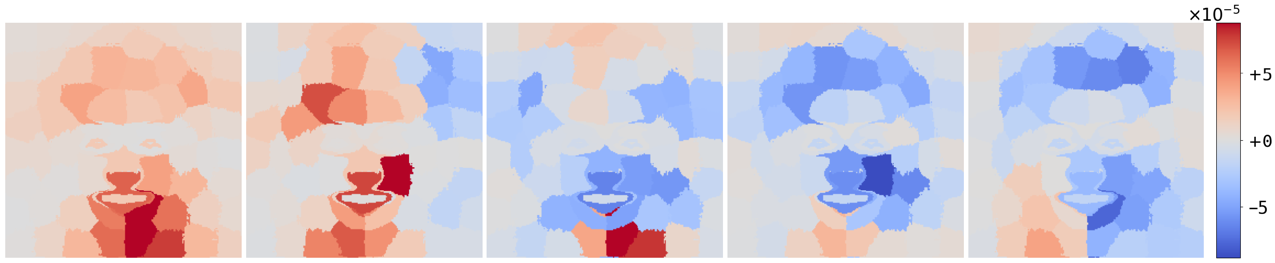}}\hfill
\subfloat[CheXpert, Grid ($16\times 16$)\label{fig:RCSanticorrreschex}]{
  \includegraphics[width=0.75\linewidth]{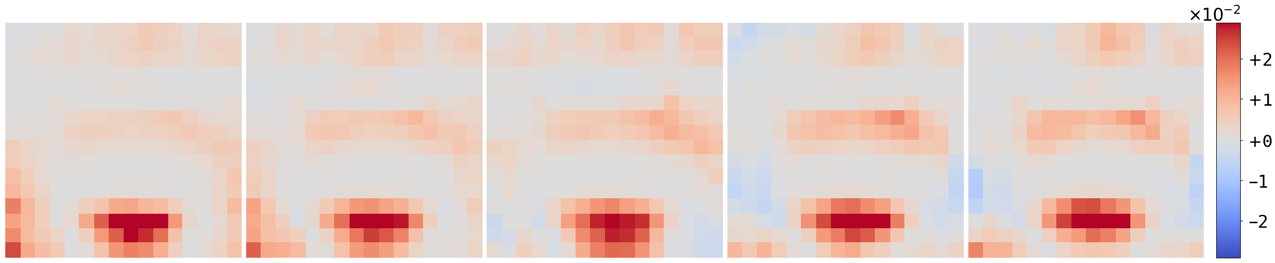}}\hfill
\subfloat[CheXpert, Superpixel (k=64)\label{fig:RCSanticorrincchex}]{
  \includegraphics[width=0.75\linewidth]{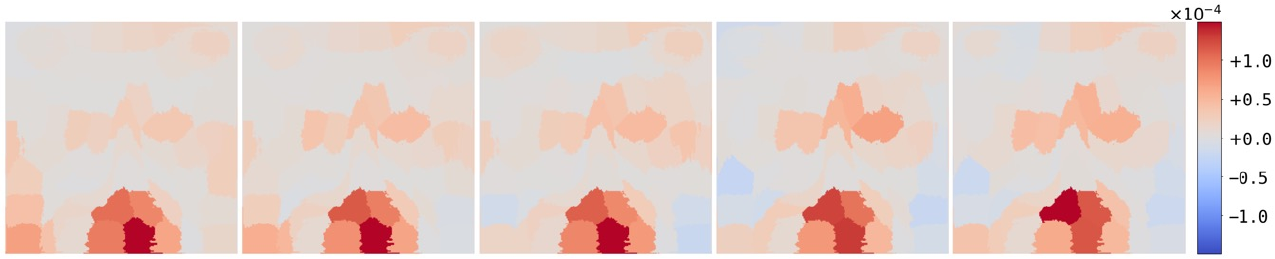}}
\caption{Region contribution scores (RCS) across increasing number of discordant pairs. Each row shows, from left-to-right, $25\  \text{samples} \to 10\% \to 20\% \to 30\% \to 40\%$ discordant pairs. All attributions were produced using Grad-CAM based on the grid-based $16\times16$ partitioning approach.}
\label{fig:RCS}
\end{figure*}

\begin{keytakeaway}[]
\textbf{Conclusion}: Region Contribution Score (RCS) maps localise which regions contribute most to the partial correlation, and hence to shortcut-related alignment.
\end{keytakeaway}

\subsection{Stability Analysis}
\label{multiseedcomp}
It is known that attribution maps can vary across different training seeds due to training time stochasticity introduced by factors such as data shuffling and model weight initialisation, i.e. the attributions can be unstable. Therefore, we analyse the stability of our method
by performing experiments in which the BA, TS and SA models are trained with different seed initialisations. Figure~\ref{fig:seeds} reports the pairwise, partial and deviation-based correlations for each model training run and shows that both ResNet50 and InceptionV3 remain highly stable across multiple seeds for both pairwise and partial correlations. The deviation-based correlation has slightly lower but still reasonable stability.

\begin{figure}[htpb!]
\centering
\resizebox{.5\textwidth}{!}{
  \includegraphics[width=\linewidth]{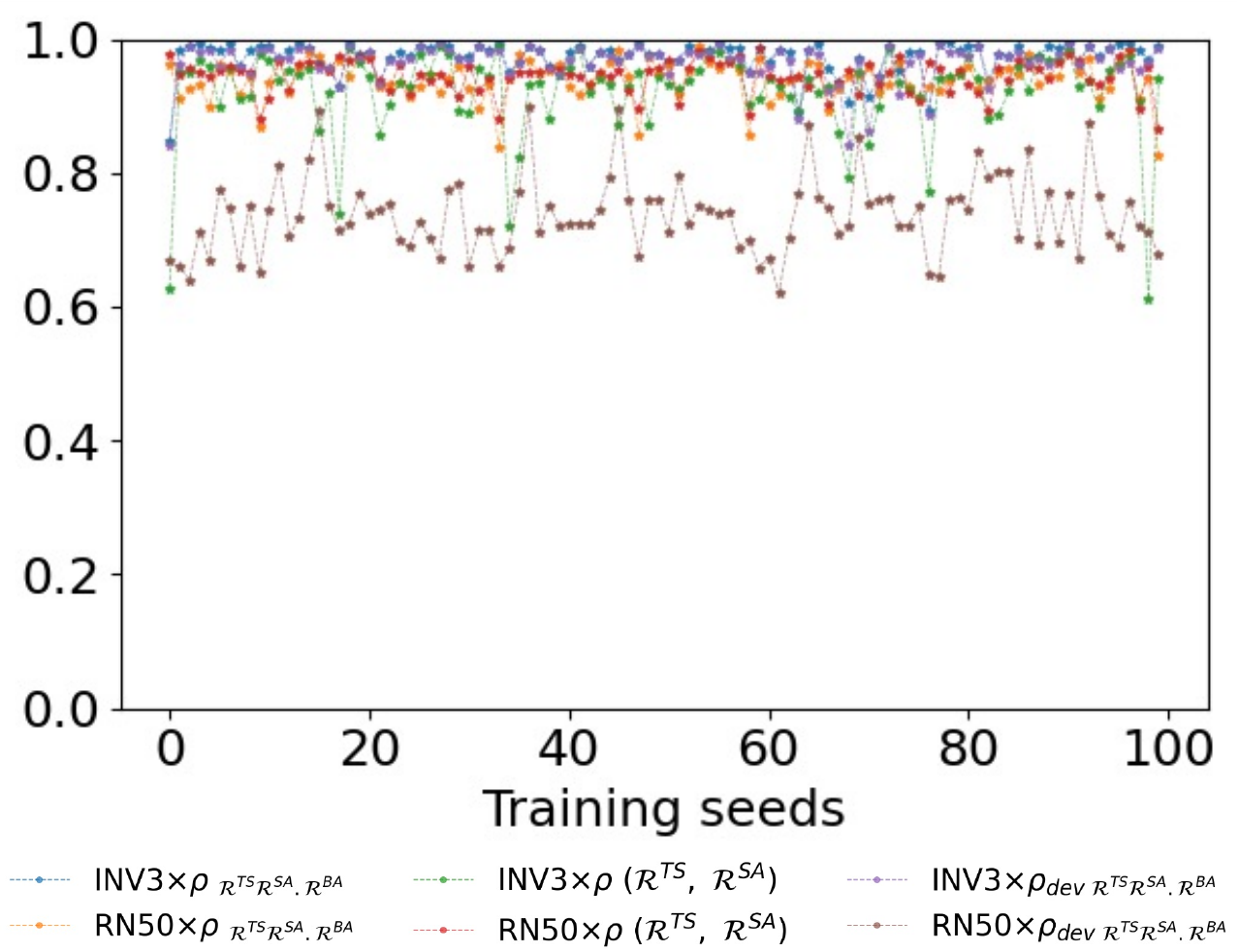}}
  \caption{Pairwise, partial and deviation-based (indicated as $\rho_{dev}$ correlation values for ResNet50 and InceptionV3 over multiple seeds based on the $16\times 16$ grid partitioning method using Grad-CAM attributions from the CelebA dataset.}
  \label{fig:seeds}
\end{figure}

\begin{keytakeaway}[]
\textbf{Conclusion}: The pairwise, partial, and deviation-based correlations are stable across random seeds, despite stochasticity in training and attribution maps.
\end{keytakeaway}

\subsection{Shortcut Mitigation}
\label{subsection:shortcutmitig}
Finally, we provide an illustration of the utility of the outputs of our method by showing how they can be used by a simple test-time attenuation technique to reduce the severity of shortcut learning.

Specifically, we used the RCS maps shown in Figure~\ref{fig:RCS} (left-most maps in every row) to weight regions based on their contribution to the partial correlations $TS \leftrightarrow SA\mid BA$ (shortcut-related) and $TS \leftrightarrow BA\mid SA$ (task-related). Firstly, we combine the corresponding RCS maps as:
\[
\mathrm{RCS}^{*} = \mathrm{RCS}(TS \leftrightarrow SA\mid BA) - \mathrm{RCS}(TS \leftrightarrow BA\mid SA).
\]
Note that positive values of $\mathrm{RCS}^{*}$ identify regions whose residual rank structure is more aligned with SA (shortcut-heavy), whereas negative values identify regions that are more aligned with BA (task-aligned).

During inference, we apply $\mathrm{RCS}^{*}$ in feature space just before the final global pooling layer. We interpolate $\mathrm{RCS}^{*}$ to match the spatial resolution of the penultimate feature maps and then construct a spatial weight mask $w$ that modulates the contribution of each location to the pooled feature vector. The mask is controlled by two hyperparameters: an ``up'' factor $\alpha$ and a ``down'' factor $\beta$. Regions with negative $\mathrm{RCS}^{*}$ (task-aligned) are scaled by $\alpha$, and regions with positive $\mathrm{RCS}^{*}$ (shortcut-aligned) are scaled by $\beta$. For small values of $\alpha$ and $\beta$, this behaves like a soft attenuation of shortcut-heavy regions and amplification of task-aligned regions; for more aggressive settings, the mask can effectively reverse the contribution of some regions, which is our intention. At test time, we obtain the penultimate feature maps, apply this RCS$^{*}$-derived mask, perform a weighted global pooling, and pass the pooled features to the final linear classifier (see Algorithm~\ref{alg:rcs_attenuation} in Appendix~\ref{appendix:rcs_attenuation} for further details).

The up-weight $\alpha$ and down-weight $\beta$ are hyperparameters
that control the scaling of positive and negative evidence. To find values for $\alpha$ and $\beta$, we
partition the original test split into four disjoint folds. For
each run, we treat three folds as a tuning set and one fold as a
held-out evaluation fold. We compute $\mathrm{RCS}^*$ and perform
a grid search over $(\alpha,\beta)$ using only the three tuning
folds, subject to the constraint that the worst-group accuracy on
the validation set does not decrease. We then fix $(\alpha,\beta)$
and evaluate the attenuated model on the held-out fold. Each fold is used once as the evaluation fold, and reported test metrics are averaged over the four evaluation folds.

Table~\ref{tab:attenuation} shows the results of this approach for CelebA using both ResNet50 and InceptionV3. It can be seen that
the inference time feature space weighting improves the overall and worst group performances. This is especially true for ResNet50 where there is about $17\%$ improvement in the balanced accuracy and $54\%$ improvement in worst group accuracy.

To verify that the weightings are truly meaningful, we also created random RCS-like maps in which we shuffled the original $\text{RCS}^*$ values by shifting each value to a new location which was at least $50\%$ of the map size away from its original location. We repeated this experiment over 10 different random RCS-like maps and report the average results along with the standard deviation. As can be seen in Table~\ref{tab:attenuation} (third row for each model), this results in a decrease (on average) in overall performance for both models and inconsistent changes in worst group accuracy.


\begin{table}[t]
\centering
\caption{Results for mitigation of shortcut learning using test-time attenuation based on RCS maps. Balanced accuracy (BAcc) and worst-group accuracy (WGAcc).}
\resizebox{.45\textwidth}{!}{
\begin{tabular}{l l l c c}
\toprule
Model & Dataset & Mask type & BAcc (\%) & WGAcc (\%) \\
\midrule
ResNet50 & CelebA & Baseline & 68.48 & 20.79 \\
         &        & RCS      & \textbf{85.1} & \textbf{74.8} \\
         &        & Random & $62.21 \pm 12.38$ & $21.14 \pm 19.78$ \\
\midrule
InceptionV3 & CelebA & Baseline & 70.19 & 17.19 \\
            &        & RCS      & \textbf{74.4} & \textbf{37.99} \\
            &        & Random & $66.44 \pm 7.53$ & $27.07 \pm 10.87$ \\
\midrule
ResNet50 & CheXpert & Baseline & 65.61 & 25.87 \\
         &        & RCS      & \textbf{66.25} & \textbf{30.25} \\
         &        & Random & $52.84 \pm 13.28$ & $21.2 \pm 15.02$ \\
\midrule
InceptionV3 & CheXpert & Baseline & 65.7 & 34.62 \\
         &        & RCS      & 65.1 & 33.36 \\
         &        & Random & $55.2 \pm 11.66$ & $26.1 \pm 11.45$ \\
\bottomrule
\end{tabular}}
\label{tab:attenuation}
\end{table}

\begin{keytakeaway}[]
\textbf{Conclusion}: RCS maps can be used to guide test-time attenuation in feature space, improving worst-group performance in settings with shortcut reliance.
\end{keytakeaway}

\section{Discussion}

Our results show that turning attribution maps into region-level rank profiles enables reproducible, dataset-level tests of shortcut use while preserving spatial meaning. Below, we first discuss methodological choices (block-level aggregation and partitioning), then analyse how shortcut correlations vary with increasing discordant training data, and finally consider implications for deployment, mitigation at test time, and the strengths and limitations of the framework.

\subsection{Why Block-Level Beats Pixel-Level}
Pixel-level saliency is sensitive to noise and model stochasticity, which complicates dataset-level inference. Aggregating to blocks and then ranking regions mitigates these issues by (i) smoothing idiosyncratic pixel noise, (ii) removing scale differences via ranks, and (iii) enabling hypothesis tests on compact, interpretable objects (the region ranks). Empirically, this stabilisation is reflected in the across-seed consistency of the correlations (Figure~\ref{fig:seeds}) despite training stochasticity. To check the robustness of block level metrics, we also compare “aggregate then rank” vs. “rank then aggregate” strategies. These results are presented in~\appref{appendix:aggrank}, in which it is shown that "aggregate then rank" is less informative compared to the chosen "rank then aggregate" strategy.

\subsection{Partitioning}
\label{sec:disc_partitions}
We evaluated fixed grid-based partitions and superpixel-based partitions. In addition, when an atlas is available, we also showed how it can be used to define more anatomically meaningful partitions (for the brain MRI dataset). Each choice has its benefits and limitations and the best choice will be dataset and application dependent, as discussed below.
\begin{itemize}[leftmargin=1.1em]
  \item \emph{Grid-based partitioning} ensures simplicity and reproducibility and can be adapted to the expected nature of the features (if known). Coarse grids favour diffuse cues, whereas fine grids might better capture localised cues, but this may come at the cost of increased noise if regions become too small.
  \item \emph{Superpixel-based partitioning} enables more human interpretable regions because it follows image edges. 
    However, it does rely upon the robustness of the superpixel algorithm.
  \item \emph{Atlas-based partitions} enable greater flexibility and highly interpretable features through the incorporation of prior anatomical knowledge. However, this fixes the partitions and is not data-driven which may introduce bias into the findings. Obviously, atlas-based partitioning also requires the presence of an atlas and adds the complexity of the alignment step to ensure that each image has anatomical correspondence with the atlas.
\end{itemize}

Practically, we recommend reporting multi-granularity results (i.e. both coarse and fine grid/superpixel) if possible, plus a domain-informed (i.e. atlas-based) partition where available. When atlas partitions vary widely in size, one should consider normalising region statistics by partition area or using order statistics (e.g., 90\textsuperscript{th} percentile) to avoid large partition cells dominating the rank profile. Our current approach of using the mean of the attribution values may also be a good choice.

\begin{figure}[htpb!]
\centering
  \includegraphics[width=\linewidth]{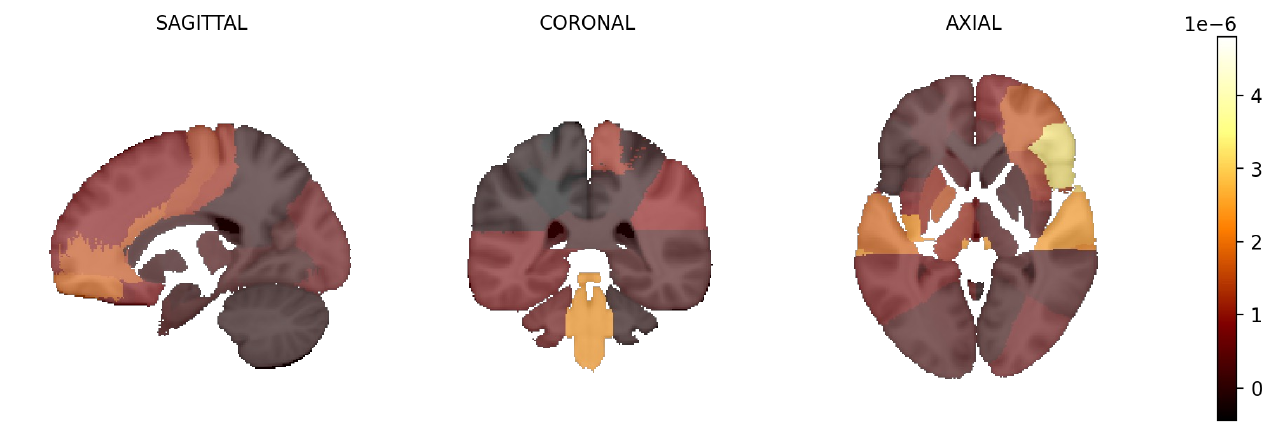}
  \caption{Region contribution score plot for the ResNet50 model on the ADNI dataset. Brighter colour indicates stronger contribution to the partial correlation $TS\ \leftrightarrow\ SA\ |\ BA$.}
  \label{fig:adni_srel}
\end{figure}

\subsection{What Increasing Discordant Pairs Reveals}

Our results have shown that, as the discordant pair subset grows, the partial correlation drops towards zero (CheXpert) or flips sign (CelebA). The sign flip carries useful information with regard to understanding the nature of shortcut learning. Specifically, it indicates \emph{counter-alignment}: once the shortcut is sufficiently disrupted, TS departs from SA in a spatially consistent way after controlling for BA. CheXpert’s monotonic decay without reversal suggests a more diffuse/global sensitive attribute signal (i.e. due to sex) in chest X-rays, which is harder to invert but still attenuated by counter-alignment. These patterns show that the partial correlation is sensitive not only to the \emph{presence} but also the \emph{structure} of the shortcut features (i.e. localised vs.\ diffuse).

Additionally, it can also be seen that different models can learn different types of shortcut feature given the same dataset. For instance, the CheXpert RCS maps for ResNet50 indicate the presence of diffuse cues whereas for InceptionV3, there is a suggestion of more localised cues (see the left corner of Figure~\ref{fig:RCSanticorrincchex}). Complementary RCS maps for $\,TS\leftrightarrow BA\mid SA\,$ (task-related regions) are shown in~\appref{appendix:potentiallytaskrelated}.

\subsection{Real World Implications}
Because our indices are computed on the balanced test set and conditioned on BA, they serve as a post-hoc test for deployment: if the partial correlation $\,\rho_{TS,SA.BA}$ is high (and RCS localises sensitive regions), a model likely relies on sensitive attribute-related cues. Iterative model training cycles can integrate our approach until the performance reaches a tolerable level.

Another choice that appears when developing models is how best to exploit the available data for training. Balancing the dataset based on one or more attributes often results in a reduction in training set size, and a consequent drop in performance. On the other hand, using all data may result in shortcut learning. Our framework can be used to check if the imbalance of a specific sensitive attribute is harmful and in cases where it is not, the dataset size need not be reduced. Additionally, in medical datasets, there might be multiple sensitive attributes in which case, decisions on balancing, etc. can be made based on the analyses on specific attributes.

\subsection{Are These Shortcuts Really Diffuse?}

The RCS maps in Figure~\ref{fig:RCS} show that the region-wise contributions to the partial correlation (for CheXpert) are very diffuse, which is indicative of shortcut learning features being spread across many regions. However, this cannot be said with certainty. One argument could be that the BA region-wise attributions will not be consistent across the dataset as the task-related visual evidence for pleural effusion is not fixed in  a single location. However, we can see from Figure~\ref{fig:shorttrends} that the pairwise correlation between TS and SA does not vary for CheXpert across different numbers of discordant examples (in contrast to CelebA). This potentially implies that the nature of shortcuts here is diffuse since the partial correlation drops with an increase in number of discordant pairs. Moreover, while these observations are related to ResNet50, with InceptionV3 the partial correlation drops to a small negative value when the discordant pairs are increased to 40\%. The regions positively contributing to the correlation do not flip to negatives and a significant number of diffuse blocks/cells are still present. More details about other correlations indicating the same are discussed in Appendix~\ref{appendix:potentiallytaskrelated}. There could also be potential subsets/slices where shortcuts interact differently. This can be analysed using subset level RCS maps.

\subsection{Test-Time Attenuation}
Our mitigation experiments highlight the possibility of performance improvements using test-time attenuation. There were significant performance gains for CelebA. For CheXpert, however, RCS-based weighting yields only modest improvements over the baseline. One plausible explanation is that the task-associated regions vary across CheXpert images, as pleural effusion is not localised in a fixed location, so upweighting is not targeting shortcut behaviour as specifically as in CelebA. Another aspect is that the layer targeted for test-time attenuation might not be the most effective one for mitigating bias; layer-wise analyses of shortcuts might provide more evidence here~\cite{tsoy2025measuring}.

We did not include mitigation results for ADNI (3D MRI). For this dataset, per-subject mitigation would require de-registering the population-level RCS maps into each subject’s native space, followed by recomputation or spatial transformation of attribution maps. This is a substantial engineering step orthogonal to the goal of OSCAR, which focuses on localising shortcut regions at the population level. Therefore, we leave native-space mitigation for ADNI as future work.

\subsection{Limitations and Strengths}
\label{sec:limitations}
Our framework is deliberately agnostic to the choice of interpretability method. This is a strength: it ensures broad applicability across architectures and modalities. In support of this, we also demonstrate OSCAR on a Vision Transformer (ViT) using the Attention-aware Layer-wise Relevance Propagation (AttnLRP) interpretability method, observing similar shortcut localisation behaviour (see \appref{appendix:transformermodels}).
Nevertheless, it does mean that our method (like any other) relies upon the quality and stability of the attributions - if an attribution method produces poor or uninformative maps, then any attribution-based shortcut detection method will suffer. In contrast, if a chosen attribution method faithfully captures model evidence, our rank-based aggregation translates that evidence into robust, testable hypotheses about shortcut use. Future extensions could include richer attribution ensembles, adaptive or anatomy-aware partitions beyond the atlas-based approach, and generalisations to multi-attribute and continuous settings. 

Furthermore, our method makes an assumption of approximate spatial alignment between image samples, which must be met for it to be successful. This is necessary to make the comparisons of attributions across regions meaningful. In our examples, cropped faces were approximately aligned in CelebA, chest X-rays also featured approximate alignment, and image registration was used to ensure alignment in the brain MRI dataset. Significant misalignment may negatively impact the correlations. The sensitivity of the correlations to spatial misalignment remains the focus of future work.

Another limitation is that our current method measures shortcut learning and identifies shortcut features at the dataset level. It might be that the use of shortcut features is limited to subsets, or `slices' of the dataset. Combining OSCAR with slice discovery methods~\cite{eyuboglu2203domino} would be an interesting area for future work.

\section{Conclusion}

We present OSCAR, a pixel-space auditing framework that tests and localises shortcut reliance by comparing aggregated rank profiles of attribution maps across three models: BA, TS, and SA. The resulting correlations of these profiles enable hypothesis testing of shortcut learning, while the RCS maps localise contributing regions. Across CelebA, CheXpert, and ADNI (brain MRI), we observed: (i) robust, spatially meaningful detection of shortcuts; (ii) sensitivity to localised vs.\ diffuse attribute signals; and (iii) stability across random seeds. 

We envision our framework as a lightweight audit step: if shortcut correlation is high, models should be retrained with counter-examples or balancing, then re-audited until the shortcut correlation falls below a pre-specified threshold. This complements standard accuracy/OOD tests. By pairing attribution with statistical inference in pixel space, our approach makes shortcut auditing more \emph{actionable}: it not only asks \emph{whether} a model shortcuts, but also \emph{where}.

\section*{Acknowledgments}
This research was supported by the UK Engineering and Physical Sciences Research Council (EPSRC) [Grant reference number EP/Y035216/1] Centre for Doctoral Training in Data-Driven Health (DRIVE-Health) at King's College London. 
Data used in preparation of this article were obtained from the Alzheimer’s
Disease Neuroimaging Initiative (ADNI) database (adni.loni.usc.edu). As such,
the investigators within the ADNI contributed to the design and implementation of ADNI and/or provided data but did not participate in analysis or
writing of this report. A complete listing of ADNI investigators can be found at:
\url{http://adni.loni.usc.edu/wp-content/uploads/how_to_apply/ADNI_Acknowledgement_List.pdf}.


\bibliographystyle{IEEEtran}
\bibliography{references}

 
\vspace{15pt}

\renewcommand{\thesubsectiondis}{\Alph{subsection}}

\appendix
\subsection{Additional Results}
\label{appendix:additionalresults}
\begin{figure}[htpb!]
\centering
\resizebox{.45\textwidth}{!}{
  \includegraphics[width=\linewidth]{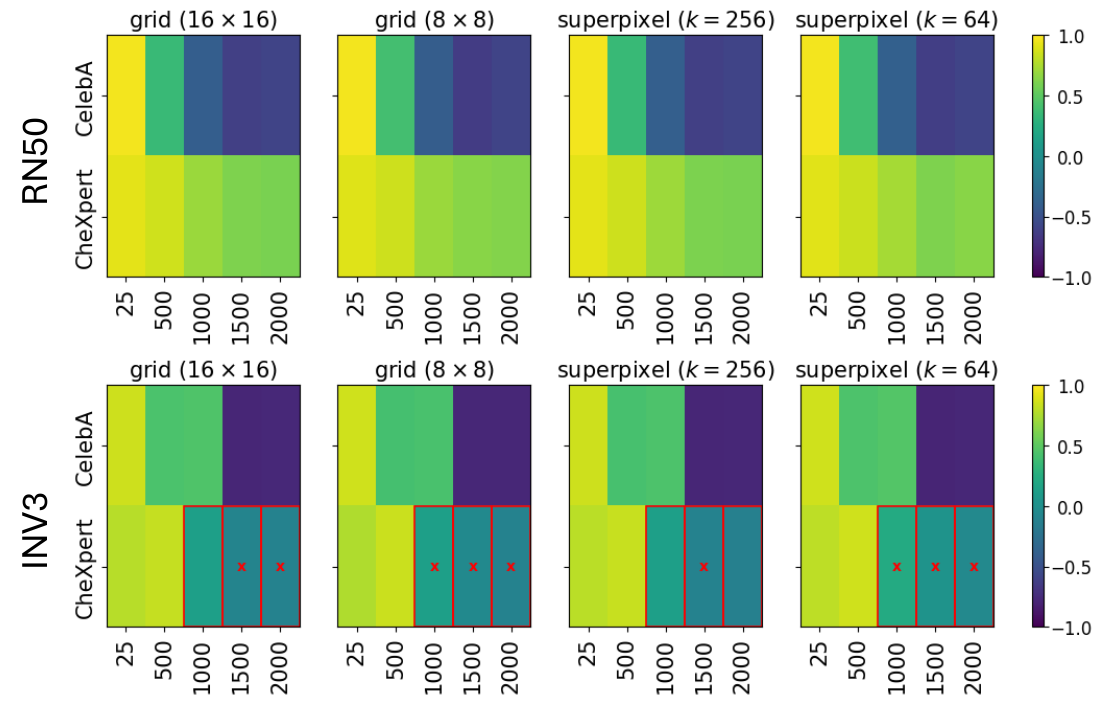}}
  \caption{Partial correlation heatmaps with 95\% image-level bootstrap confidence intervals. For each dataset, backbone, and spatial partition, the heatmaps show the point
estimates of the partial correlation 
$\rho_{\mathcal R^{TS}\mathcal R^{SA}.\mathcal R^{BA}}$
as the number of discordant $(\mathcal Y,\mathcal A)$ pairs in training increases.
Red ``$\times$'' markers denote entries whose 95\% bootstrap confidence interval
contains zero (equivalently, non-significant region-permutation tests at the 5\% level). RN50 and INV3 stand for ResNet50 and InceptionV3 respectively.}
  \label{fig:stsuperpixel}
\end{figure}

\subsubsection{Image-level bootstrap confidence intervals and region-permutation $p$-values}
\label{appendix:imagelevelcis}

For each dataset, model, and spatial partition, we report (i) a two-sided
permutation $p$-value computed from 10{,}000 random permutations of the region
indices, and (ii) a 95\% confidence interval obtained from 10{,}000
image-level bootstrap resamples.  
In each bootstrap replicate, test images are resampled with replacement,
aggregated rank profiles $(\mathcal R^{TS}, \mathcal R^{SA}, \mathcal R^{BA})$
are recomputed using the same median operator, and the partial correlation
$\rho_{\mathcal R^{TS}\mathcal R^{SA}.\,\mathcal R^{BA}}$ is re-evaluated.

The bootstrap intervals quantify sampling variability due to the finite test
set, while the permutation test evaluates whether the observed residual
alignment exceeds what would be expected under random spatial correspondence
across regions.  
Across grid, superpixel, and atlas partitions, both procedures show the same
trend with respect to the number of discordant pairs: correlation magnitudes
are largest when shortcut opportunities are strongest and decrease as shortcut
structure is reduced.  
Grid partitions typically yield smaller $p$-values because they induce lower
within-block spatial autocorrelation, resulting in a larger effective number of
independent regions.

Figure~\ref{fig:stsuperpixel} presents heatmaps of partial correlations, their
95\% confidence intervals, and significance indicators ($p > 0.05$) for all
datasets and partitions.

\subsubsection{Sensitive Attribute Classification}
\label{appendix:sensitiveattributeclass}

For completeness, we report the test-set accuracy of the sensitive attribute (SA)
models used in our correlation analyses. Each SA model is trained to predict the
sensitive attribute $\mathcal A$ from the images $\mathcal X$ using the same train/validation splits
as the BA model. Table~\ref{tab:sa-acc} reports the overall accuracy for
both ResNet50 and InceptionV3 across all datasets.

\begin{table}[!ht]
\centering
\caption{Test set accuracy (\%) of the Sensitive Attribute (SA) models.}
\label{tab:sa-acc}
\begin{tabular}{lccc}
\toprule
Dataset & Attribute & Model & Accuracy (\%) \\
\midrule
CelebA      & Gender & ResNet50    & 98.0 \\
CelebA      & Gender & InceptionV3 & 97.8 \\
CheXpert    & Sex    & ResNet50    & 92.4 \\
CheXpert    & Sex    & InceptionV3 & 88.6 \\
ADNI        & Sex    & ResNet50 (3D)    & 79.3 \\
\bottomrule
\end{tabular}
\end{table}

We additionally attempted an approach inspired by ShorT~\cite{brown2023detecting} linear attribute-subspace removal by projecting out the direction of sensitive attribute encoding identified by a linear probe on the penultimate layer features of the TS/BA models. Consistent with~\cite{brown2023detecting}, we find that removing this single direction has negligible effect on TS or BA predictions. This suggests that attribute reliance in our tasks is not mediated by a single linear representation direction, but instead reflects spatially structured or distributed signals. OSCAR’s region-level conditional correlations are able to detect these cases, whereas linear subspace removal is insufficient (see Section~\ref{varyinganticorr}).

\subsection{Interpretability Methods and Model Architectures}
\label{appendix:interpretabilitymethods}
While our approach is agnostic to the interpretability method used, we share the results on another method to show that the approach is not method-specific. Specifically, we compute signed relevance using LRP and then retain only the positive relevance (using ReLU) to focus on evidence supporting the predicted class. We highlight through the attribution maps in Figure~\ref{fig:vggattributions}, the alignment of attributions between TS and SA which focus on non-hair regions as compared to BA which focuses on the hair regions.

In Figure~\ref{fig:lrpseeds}, we observe a similar stability for LRP-based partial correlation values to that of Grad-CAM for both ResNet50 and InceptionV3 models.
This figure also shows results for other model architectures, namely MobileNetV3-Large and VGG16. The stability for the VGG model is better for LRP compared to Grad-CAM, whereas for MobileNetV3-Large, it is more stable for Grad-CAM compared to LRP. This shows that different models can work better with different kinds of interpretability method and therefore, a careful initial analysis can help in obtaining meaningful and stable correlations.

In Figure~\ref{fig:vggattributions}, we also share attribution maps from the CelebA dataset using the VGG model and LRP attributions to highlight the common attributions between the TS and SA models (focusing more on the facial region) and the BA model (focusing more on the hair).

\begin{figure}[tb!]
\centering
  \includegraphics[width=\linewidth]{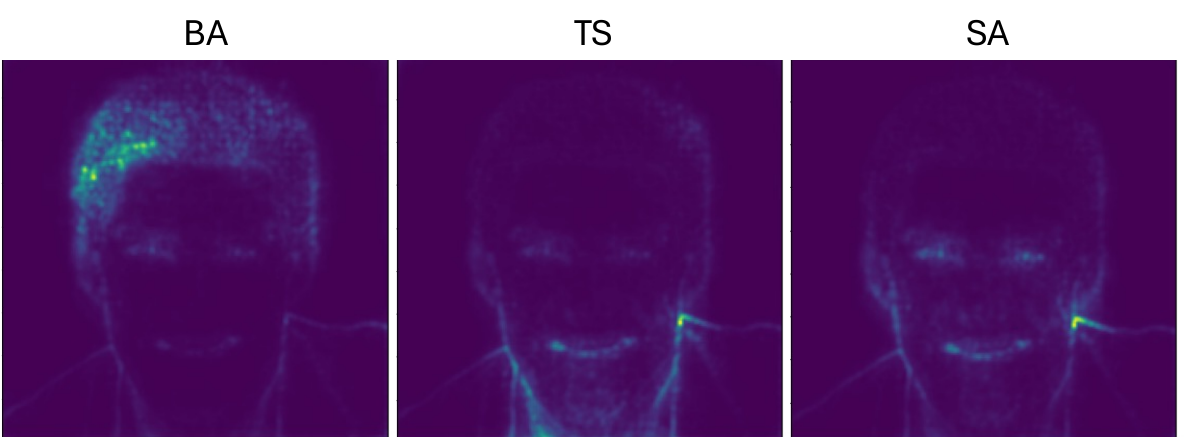}
  \caption{LRP attribution maps for the CelebA dataset and VGG16 model.}
  \label{fig:vggattributions}
\end{figure}

\begin{figure}[htpb!]
\centering
\resizebox{.45\textwidth}{!}{
  \includegraphics[width=\linewidth]{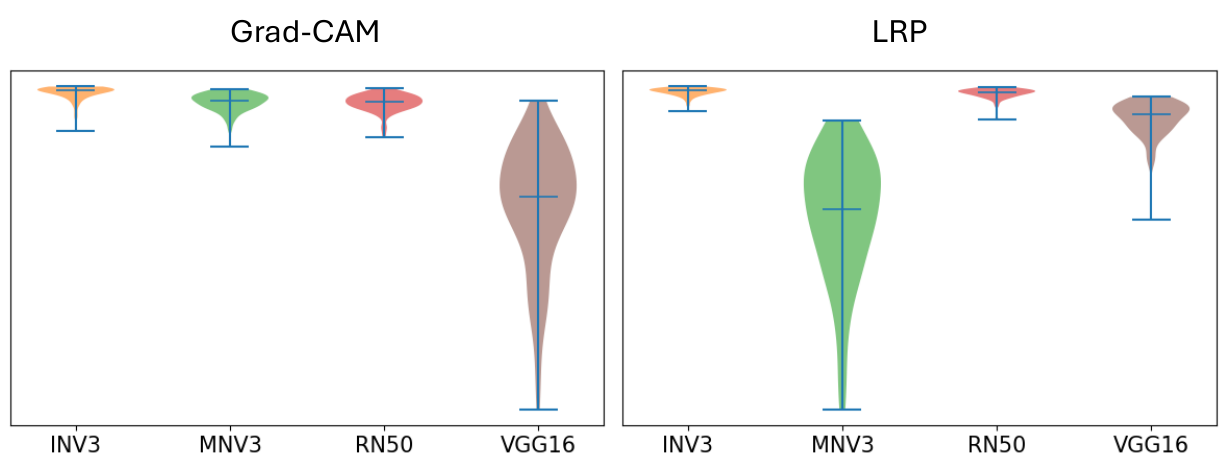}}
  \caption{Partial correlation $\rho_{\mathcal R^{TS}\ \mathcal R^{SA} . \ \mathcal R^{BA}}$ for ResNet50, InceptionV3, VGG and MobileNetV3-Large over multiple seeds based on the $16\times 16$ grid partitioning method using Grad-CAM (left) and LRP (right) attributions.}
  \label{fig:lrpseeds}
\end{figure}

\subsection{Theoretical Properties of OSCAR}
\label{appendix:theory}

This section presents basic properties of OSCAR statistics that follow directly from their construction. All notation follows Section~\ref{sectLnotationsetup} and Definition~\ref{def1:pcorr}.

\subsubsection{Rank-Based Nature and Monotone Invariance}
\label{appendix:subsec:rankmonotoneinvariance}

For model $M\in\{BA,TS,SA\}$ and image $i$, let
$\mathbf s_i^M = (s_{i,1}^M,\dots,s_{i,n}^M)$ denote the region-wise
attribution scores and
$\mathrm{rank}(\mathbf s_i^M)$ the vector of their ranks, breaking ties in any fixed
way.
The OSCAR profile for model $M$ is the component-wise median of these
rank vectors across images:

\begin{proposition}[Monotone Invariance]
\label{prop:monotone}
Let $g^M:\mathbb R\to\mathbb R$ be strictly increasing for each model
$M$, and define transformed scores
$\tilde s_{i,r}^M = g^M(s_{i,r}^M)$.
Let $\widetilde{\mathcal R}^{M}$ be the OSCAR profile constructed from
$\tilde s_{i,r}^M$.
Then, for every $M$, $\widetilde{\mathcal R}^{M} = \mathcal R^{M}$.

In particular, any statistic that depends only on
$(\mathcal R^{BA},\mathcal R^{TS},\mathcal R^{SA})$ (pairwise
correlations, partial correlations, deviation indices) is invariant to
strictly monotone transformations of the underlying attribution scores.
\end{proposition}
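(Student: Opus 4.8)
The plan is to reduce the statement to the elementary fact that a strictly increasing map preserves order, and then to propagate this invariance through the two aggregation steps that define the OSCAR profile: the per-image ranking and the across-image median.

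First I would fix a model $M$ and an image $i$ and compare the two region-score vectors $\mathbf s_i^M$ and $\tilde{\mathbf s}_i^M = g^M(\mathbf s_i^M)$ component-wise. Since $g^M$ is strictly increasing it is injective and order-preserving, so for any two regions $r,r'$ we have $s_{i,r}^M < s_{i,r'}^M \iff \tilde s_{i,r}^M < \tilde s_{i,r'}^M$ and $s_{i,r}^M = s_{i,r'}^M \iff \tilde s_{i,r}^M = \tilde s_{i,r'}^M$. Hence the strict-order relation \emph{and} the tie pattern among the $n$ region scores are identical before and after transformation. I would then invoke the definition of the rank operator: ranks depend only on the pairwise order relations together with the fixed tie-breaking rule, so with both the order relations and the tie pattern preserved, the same fixed rule assigns identical ranks, giving $\mathrm{rank}(\tilde{\mathbf s}_i^M) = \mathrm{rank}(\mathbf s_i^M)$ for every image $i$ and every model $M$.

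Next I would pass to the aggregation step. The OSCAR profile is the component-wise median over images, $\mathcal R^M_r = \mathrm{median}_i\big(\mathrm{rank}(\mathbf s_i^M)_r\big)$. Because the per-image rank vectors coincide for all $i$, the multiset of values entering the median at each region $r$ is unchanged, so $\widetilde{\mathcal R}^M_r = \mathcal R^M_r$ for all $r$, i.e. $\widetilde{\mathcal R}^M = \mathcal R^M$. The ``in particular'' claim then follows immediately: every OSCAR statistic (pairwise $\rho$, partial $\rho_{AB\cdot C}$, deviation $\rho_{\text{dev}}$) is by construction a deterministic function of the triple $(\mathcal R^{BA},\mathcal R^{TS},\mathcal R^{SA})$ alone, and a function evaluated on unchanged inputs returns unchanged outputs.

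The only point requiring care — and what I would treat as the (minor) crux — is the handling of ties: I must make explicit that strict monotonicity forces $g^M$ to be injective, so that the tie pattern is preserved \emph{exactly}, which is precisely what guarantees that the phrase ``breaking ties in any fixed way'' yields the same ranks in both cases. I would also note this as the natural boundary of the result: if $g^M$ were merely non-decreasing, distinct scores could collapse to equal values, the tie pattern would change, and the ranks (hence the profiles) need not be preserved, so strictness cannot be dropped.
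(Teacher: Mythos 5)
Your proof is correct and follows essentially the same route as the paper's: strict monotonicity preserves the pairwise order of region scores, hence the per-image ranks, hence the component-wise medians, hence the profiles and any statistic computed from them. Your additional care about tie patterns (via injectivity of $g^M$) and the remark that strictness cannot be weakened to mere monotonicity is a small but worthwhile refinement of the paper's argument, which handles ties only implicitly through the phrase ``breaking ties in any fixed way.''
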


\begin{proof}
Fix $M$ and $i$ and consider regions $r_1,r_2$.
Since $g^M$ is strictly increasing,
\[
s_{i,r_1}^M < s_{i,r_2}^M
\;\Longleftrightarrow\;
g^M(s_{i,r_1}^M) < g^M(s_{i,r_2}^M)
\;\Longleftrightarrow\;
\tilde s_{i,r_1}^M < \tilde s_{i,r_2}^M.
\]
Thus the ordering of scores across regions is unchanged, so
$\mathrm{rank}(\mathbf s_i^M) =
 \mathrm{rank}(\tilde{\mathbf s}_i^M)$ for every image $i$.
Taking component-wise medians across images gives
$\widetilde{\mathcal R}^{M} = \mathcal R^{M}$.
\end{proof}

Proposition~\ref{prop:monotone} formalises that OSCAR works purely in
rank space: it is unaffected by the absolute scale of attribution
scores.

\subsubsection{Region Contribution Scores as a Decomposition of
Partial Correlation}

Recall from Definition~\ref{def1:pcorr} that the OSCAR partial
correlation between models $A$ and $B$ given $C$ is computed by
regressing $\mathcal R^A$ and $\mathcal R^B$ on $\mathcal R^C$, taking the
residuals $e_A,e_B$, and correlating these residuals.

In Section~III-H we define the Region Contribution Score (RCS) for
each region $r$ as
\[
z_{A,r} = \dfrac{e_{A,r} - \bar e_A}{\sigma_A},
\]
\[
z_{B,r} = \dfrac{e_{B,r} - \bar e_B}{\sigma_B},
\]
\[
\mathrm{RCS}(r) = z_{A,r}\,z_{B,r}
\]
where $\bar e_A,\bar e_B$ are the sample means and
$\sigma_A,\sigma_B$ the sample standard deviations of $e_A,e_B$
across regions.

\begin{proposition}[RCS as a Per-Region Decomposition of Partial Correlation]
\label{prop:rcs_decomposition}
Let $\rho_{AB\cdot C}$ denote the sample partial correlation defined in
Definition~\ref{def1:pcorr}, and let $\mathrm{RCS}(r)$ be as above.
Then
\[
\rho_{AB\cdot C}
=
\frac{1}{n-1}\sum_{r=1}^n \mathrm{RCS}(r).
\]
In particular, each $\mathrm{RCS}(r)$ is exactly the contribution of
region $r$ to the numerator of the partial correlation; summing over
regions recovers the full statistic.
\end{proposition}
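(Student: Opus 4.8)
The plan is to recognise that the stated identity is simply the fact that a Pearson correlation coefficient equals the mean of the products of the paired $z$-scores, applied here to the residual vectors $e_A$ and $e_B$. Since Definition~\ref{def1:pcorr} already expresses $\rho_{AB\cdot C}$ as the Pearson correlation between $e_A$ and $e_B$, the entire proof reduces to an algebraic rearrangement in which the normalising constants introduced by the $z$-scoring cancel against the $1/(n-1)$ prefactor on the right-hand side. There is no probabilistic or asymptotic content; it is a finite-dimensional identity over the $n$ regions.

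Concretely, first I would substitute the definitions of $z_{A,r}$ and $z_{B,r}$ into the right-hand sum, giving
\[
\frac{1}{n-1}\sum_{r=1}^n z_{A,r}z_{B,r}
= \frac{1}{(n-1)\,\sigma_A\sigma_B}\sum_{r=1}^n (e_{A,r}-\bar e_A)(e_{B,r}-\bar e_B).
\]
Next I would insert the explicit forms of the sample standard deviations. The key bookkeeping step is to use the Bessel-corrected convention $\sigma_A = \big(\tfrac{1}{n-1}\sum_r (e_{A,r}-\bar e_A)^2\big)^{1/2}$ and likewise for $\sigma_B$, so that $\sigma_A\sigma_B = \tfrac{1}{n-1}\big(\sum_r (e_{A,r}-\bar e_A)^2\big)^{1/2}\big(\sum_r (e_{B,r}-\bar e_B)^2\big)^{1/2}$. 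Substituting this and cancelling the two factors of $1/(n-1)$ leaves exactly the ratio that defines $\rho_{AB\cdot C}$ in Definition~\ref{def1:pcorr}, completing the identity. The per-region claim then follows immediately: since $\mathrm{RCS}(r)=z_{A,r}z_{B,r}$ is a single summand, each $\mathrm{RCS}(r)$ is, up to the common factor $1/(n-1)$, precisely region $r$'s additive contribution to the numerator of the partial correlation, and summing over $r$ recovers the whole statistic.

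I do not expect any genuine difficulty here, as the result is an identity rather than an inequality or limiting statement; the only point requiring care is the variance convention. The prefactor $1/(n-1)$ on the right-hand side is exactly what is needed to cancel the $1/(n-1)$ hidden inside $\sigma_A\sigma_B$ under the sample-variance (divisor $n-1$) convention. Had $\sigma_A,\sigma_B$ instead been defined with divisor $n$, the constant would become $n/(n-1)$ and the identity would require a corresponding adjustment; so the single substantive step is to confirm that the standard deviations entering the $z$-scores match the normalisation implied by the Pearson formula in Definition~\ref{def1:pcorr}. Once that convention is fixed, the argument is a two-line cancellation.
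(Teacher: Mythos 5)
Your proposal is correct and follows essentially the same route as the paper's proof: substitute the $z$-score definitions, insert the Bessel-corrected sample standard deviations, and cancel the $1/(n-1)$ factors to recover the Pearson ratio in Definition~\ref{def1:pcorr}. Your remark about the divisor-$(n-1)$ versus divisor-$n$ convention correctly identifies the one point requiring care, and the paper resolves it the same way.
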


\begin{proof}
By Definition~\ref{def1:pcorr},
\[
\rho_{AB\cdot C}
=
\dfrac{\sum_{r=1}^n (e_{A,r} - \bar e_A)(e_{B,r} - \bar e_B)}
     {\sqrt{\sum_{r=1}^n (e_{A,r} - \bar e_A)^2}
      \sqrt{\sum_{r=1}^n (e_{B,r} - \bar e_B)^2}}.
\]
The sample standard deviations are
\[
\sigma_A
=
\sqrt{\frac{1}{n-1}\sum_{r=1}^n (e_{A,r} - \bar e_A)^2},
\]
\[
\sigma_B
=
\sqrt{\frac{1}{n-1}\sum_{r=1}^n (e_{B,r} - \bar e_B)^2}.
\]
Hence,
\[
\sum_{r=1}^n z_{A,r} z_{B,r}
=
\sum_{r=1}^n
\frac{(e_{A,r} - \bar e_A)(e_{B,r} - \bar e_B)}{\sigma_A \sigma_B}
\]
\[
=
\frac{1}{\sigma_A \sigma_B}
\sum_{r=1}^n (e_{A,r} - \bar e_A)(e_{B,r} - \bar e_B).
\]
Substituting the expressions for $\sigma_A,\sigma_B$ shows that
\[
\rho_{AB\cdot C}
=
\frac{1}{n-1}\sum_{r=1}^n z_{A,r} z_{B,r}
=
\frac{1}{n-1}\sum_{r=1}^n \mathrm{RCS}(r),
\]
as claimed.
\end{proof}

In our implementation we additionally apply an $\ell_1$ normalisation
to the RCS values so that $\sum_r |\mathrm{RCS}(r)| = 1$, which
preserves the sign of each region's effect while putting magnitudes
on a common scale.
Proposition~\ref{prop:rcs_decomposition} shows that, before this
normalisation, the RCS map is an exact spatial decomposition of the
OSCAR partial correlation: each region's value reflects its signed
contribution to the overall shortcut alignment index.

\subsection{Transformer Models}
\label{appendix:transformermodels}
In this section, we provide results obtained using the Vision Transformer (ViT)~\cite{dosovitskiy2020image} model. We used Attention-aware Layer-wise Relevance Propagation (AttnLRP)~\cite{pmlr-v235-achtibat24a}, an attribution method tailored to Transformers.

We observe similar patterns for the partial correlation values as we do for other models. Figure~\ref{fig:vit_experiments}a shows the partial correlation variation across varying numbers of discordant samples, and Figure~\ref{fig:vit_experiments} shows the stability of correlations over 100 different seeds. Additionally, in Figure~\ref{fig:RCS_transformer} we provide RCS maps on the CheXpert dataset, where it can be seen that the chest area is highlighted as the task-related region. We observe similar RCS maps for CelebA as were seen for ResNet50.  These results further highlight the model-agnostic and attribution method agnostic nature of OSCAR.

\begin{figure*}[ht!]
\centering
\subfloat[Sensitivity\label{fig:vitdiscordant}]{
\resizebox{.45\textwidth}{!}{
  \includegraphics[width=0.49\linewidth]{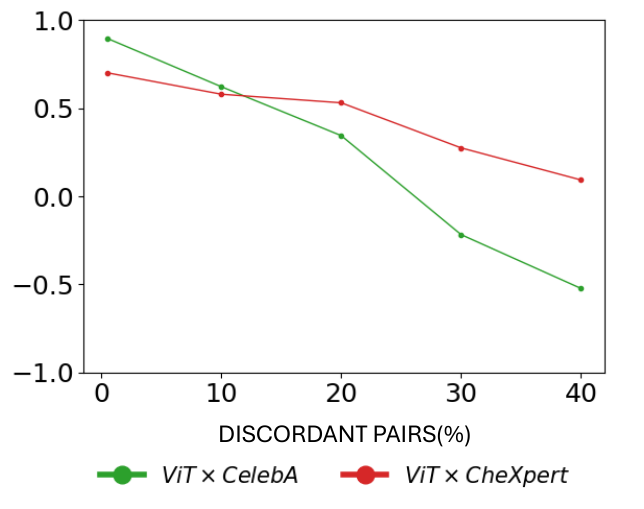}}}\hfill
\subfloat[Seed Stability \label{fig:vitstability}]{
  \includegraphics[width=0.49\linewidth]{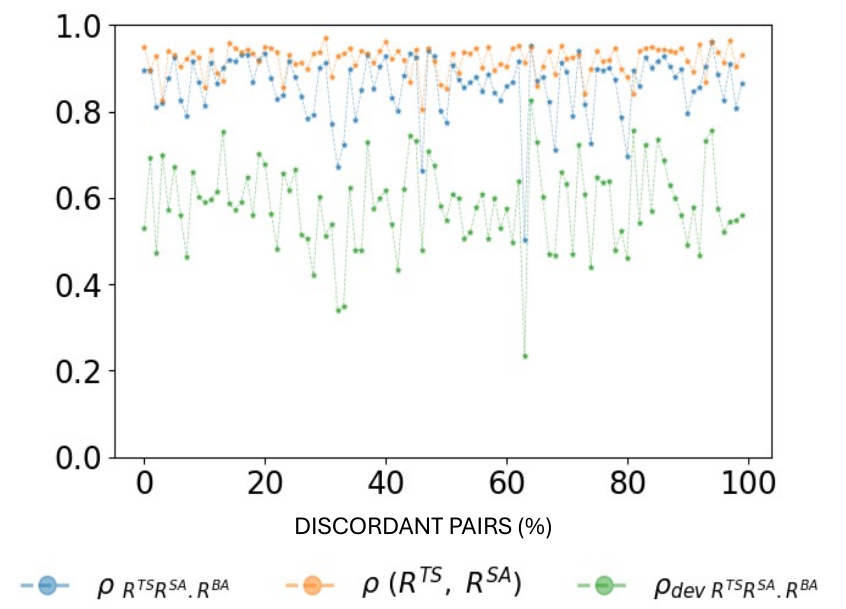}}
\caption{Sensitivity and stability analysis on ViT with AttentionLRP using grid-based ($16\times 16$) partitioning.}
\label{fig:vit_experiments}
\end{figure*}

\begin{figure}[!ht]
\centering
\resizebox{.45\textwidth}{!}{
  \includegraphics[width=\linewidth]{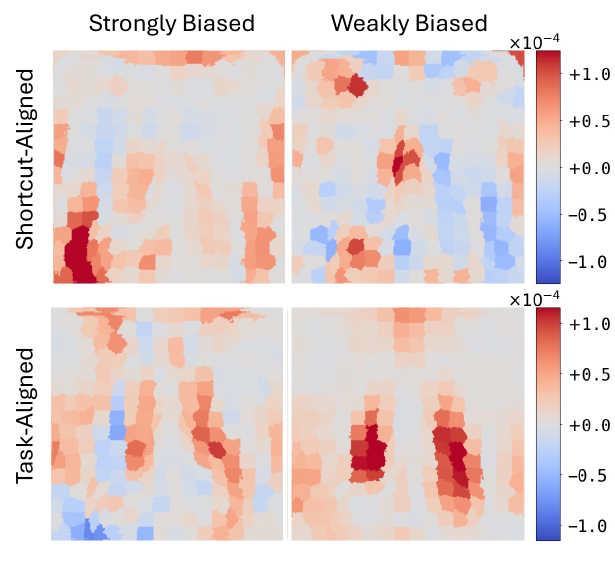}}
\caption{Region contribution score (RCS) maps across different numbers of discordant pairs using ViT on CheXpert dataset. Each row shows, from left-to-right, $25$  discordant pairs and $40\%$ discordant pairs. All attributions were produced using AttnLRP based on the superpixel ($k=256$) partitioning approach. The first row is computed using the correlation $\rho_{R^{TS}R^{SA}.R^{BA}}$ and the second using $\rho_{R^{TS}R^{BA}.R^{SA}}$ }
\label{fig:RCS_transformer}
\end{figure}

\subsection{Rank $\to$ Aggregate or Aggregate $\to$ Rank}
\label{appendix:aggrank}
In our work, we have first ranked the regions in individual images before aggregating the ranks across datasets. However, an alternate approach could be to first aggregate the mean attributions and then rank per dataset. We choose the former to be more robust to raw attribution deviations and share the variation of partial correlation with the discordant pairs in Figure~\ref{fig:srel_aggthenrank} to highlight the same. However, this is a design choice that can be explored depending on the specific scenario.

\begin{figure}[!ht]
\centering
\resizebox{.45\textwidth}{!}{
  \includegraphics[width=\linewidth]{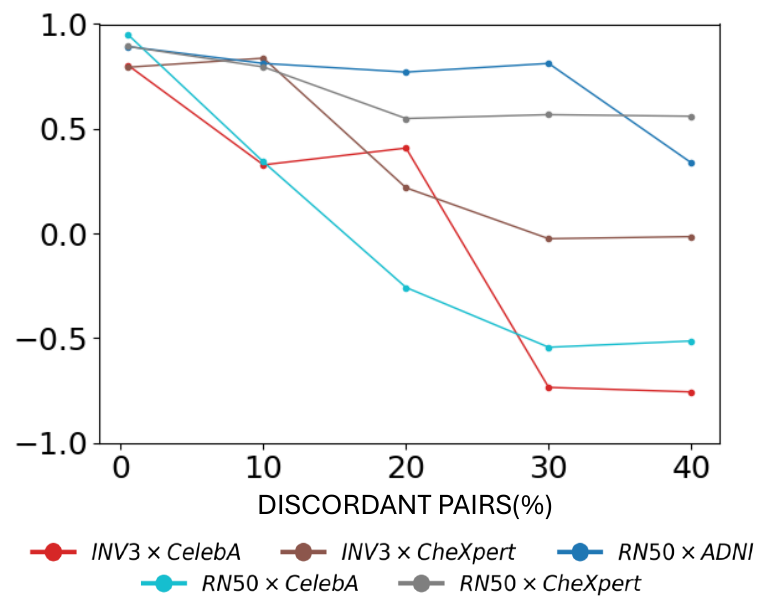}}
  \caption{Partial correlation trends for ResNet50 and InceptionV3 based on the $16\times 16$ grid partitioning method using Grad-CAM attributions. Here, we aggregate first and then rank the mean dataset attributions. RN50 and INV3 stand for ResNet50 and InceptionV3 respectively.}
  \label{fig:srel_aggthenrank}
\end{figure}



\subsection{Alternate Methods}
\label{appendix:salscoreagg}
We demonstrate results using three different alternatives that could be used in our framework (Figure~\ref{fig:srelsalscore}). Firstly, we used a saliency score of attribution regions prior to region level rank computation. To compute the Saliency score~\cite{stanley2022fairness}, we drop all the attributions below the $50^{th}$ percentile and then compute the ratio of number of the attribution pixels to the total number of pixels in a region.
Second, we use the mean of aggregated ranks instead of the standard median used in the main results. 

Finally, we discuss the Spearman's partial correlation instead of the Pearson's partial correlation used in main results.
We aggregate per-image region ranks into a dataset-level profile
$\mathcal R^M$ by taking the component-wise median across images.
The entries of $\mathcal R^M$ are therefore rank-based but also carry
information about how strongly a region tends to be preferred. For example, values
$39.5$ and $40.5$ would correspond to regions that are almost tied, whereas $10$ and
$80$ would indicate very different positions. Therefore, we investigate applying Pearson's
correlation directly to these aggregated ranks to capture both
directional agreement and the degree of mismatch in rank space.
Applying a second rank transform (i.e.\ Spearman's on $\mathcal R^M$)
would discard this local spacing information and treat all adjacent rank
differences as equally large, potentially over-penalising small
perturbations between nearly tied regions. Empirically, Spearman's yields
similar qualitative trends (Figure~\ref{fig:srelsalscore}), but Pearson's is slightly more
stable in the presence of near-ties and better reflects the strength of
alignment between region importance profiles.

\begin{figure*}[ht!]
\centering
    \resizebox{.85\textwidth}{!}{
  \includegraphics[width=\linewidth]{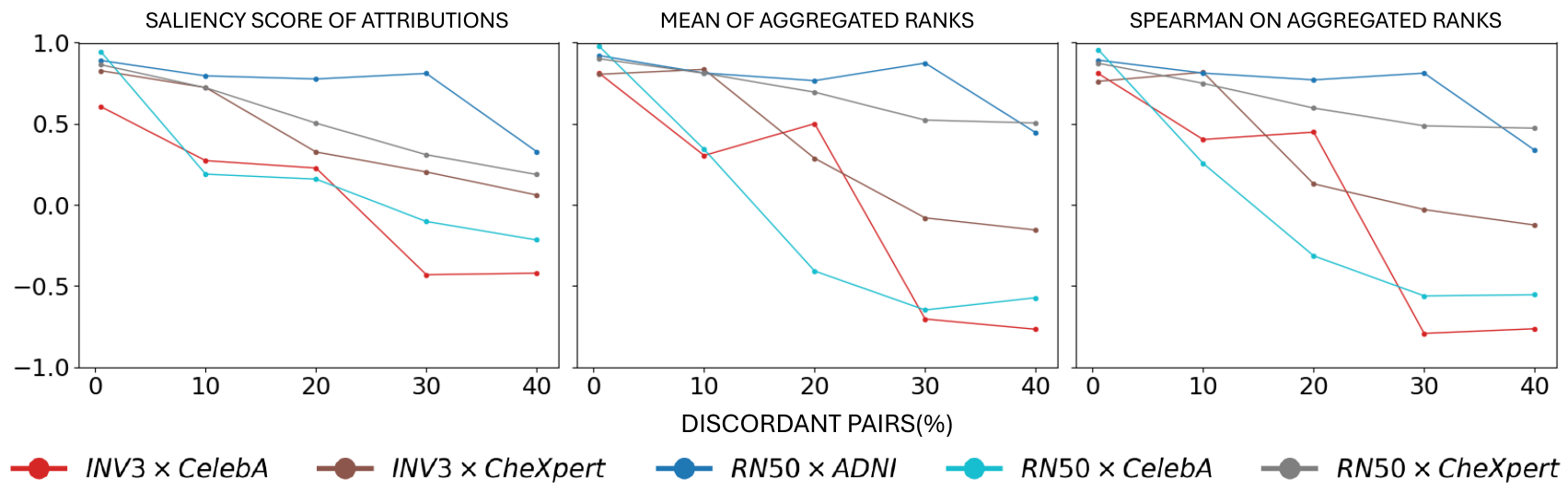}}
  \caption{Partial correlations for various alternative computations over varying numbers of discordant training pairs for CelebA and CheXpert datasets using Grad-CAM attributions.}
  \label{fig:srelsalscore}
\end{figure*}

\subsection{Potentially Task-Related and Non-Related Regions}
\label{appendix:potentiallytaskrelated}
In Figure~\ref{fig:RCS}, we showed the RCS maps computed using the partial correlation $TS \leftrightarrow SA | BA$, which indicated the regions that were potentially non-relevant and shortcut-aligned regions. For CelebA, we observed more localised cues as compared to CheXpert where the cues were more diffuse. To provide more insights, we also show in Figure~\ref{fig:Grid_useful} the RCS maps for the partial correlation $TS \leftrightarrow BA | SA$, which indicates regions that are more task-associated. It can be seen that the relevant regions (hair) for CelebA are not intersecting with the non-relevant regions, whereas these regions are more or less intersecting for CheXpert.

\begin{figure}[!ht]
\centering
\centering
    \resizebox{.45\textwidth}{!}{
  \includegraphics[width=\linewidth]{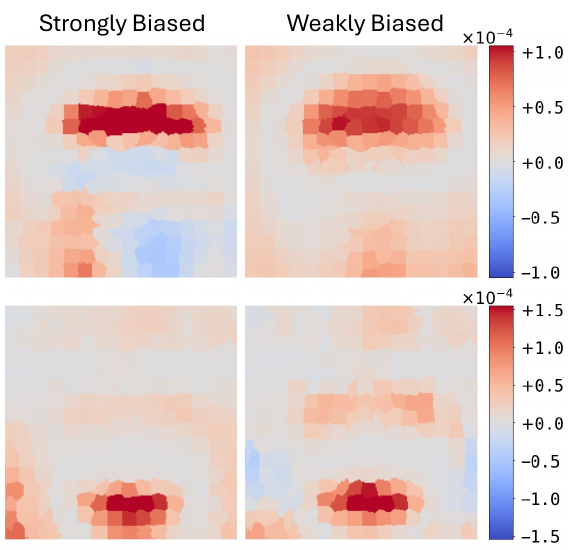}}
\caption{Region contribution scores (RCS) across different numbers of discordant pairs. Each row shows, from left-to-right, $25$ discordant pairs and $40\%$ discordant pairs. All attributions were produced using Grad-CAM (on ResNet50) based on the superpixel ($k=256$) based  partitioning approach. Here, the RCS maps are computed using the correlation $TS \leftrightarrow BA | SA$.}
\label{fig:Grid_useful}
\end{figure}

\subsection{Test-Time Attenuation with RCS-Based Feature Weighting}
\label{appendix:rcs_attenuation}

At test time, we keep the feature extractor fixed and replace the
standard global pooling with an RCS$^{*}$-based spatial weighting
applied to the penultimate feature maps before the final linear layer.
We write the network as $f = g \circ h$, where $h$ is the feature
extractor and $g$ is the final linear classifier.

\begin{algorithm}[!ht]
\caption{RCS$^{*}$-Based Weighted Pooling}
\label{alg:rcs_attenuation}
\begin{algorithmic}[1]
\REQUIRE
  input batch $x$,
  feature extractor $h$ and classifier $g$,
  RCS$^{*}$ mask at feature-map resolution,
  up-weight $\alpha > 0$, down-weight $\beta > 0$
\STATE Compute penultimate features
       \[
         F = h(x) \in \mathbb{R}^{B \times C \times H \times W}
       \]
\STATE Broadcast RCS$^{*}$ across batch and channels to obtain
       \[
         W \in \mathbb{R}^{B \times 1 \times H \times W}
       \]
\STATE For each image $b$, normalise by the maximum absolute value
       \[
         \tilde W_{b,1,u,v}
         =
         \frac{W_{b,1,u,v}}
              {\max_{u',v'} |W_{b,1,u',v'}| + \varepsilon }
       \]
       with $\varepsilon = 10^{-8}$
\STATE Define a scaling mask
       \[
         S_{b,1,u,v} =
         \begin{cases}
           \alpha, & \text{if } \tilde W_{b,1,u,v} < 0,\\[3pt]
           \beta,  & \text{otherwise,}
         \end{cases}
       \]
       and a final spatial weight
       \[
         A_{b,1,u,v} = 1 - \tilde W_{b,1,u,v}\, S_{b,1,u,v}
       \]
\STATE Compute weighted global pooling
       \[
         z_{b,c}
         =
         \frac{\sum_{u,v} A_{b,1,u,v}\, F_{b,c,u,v}}
              {\sum_{u,v} A_{b,1,u,v} + \varepsilon }
         \quad (b = 1,\ldots,B;c = 1,\ldots,C)
       \]
\STATE Obtain logits via the final linear layer:
       \[
         \hat y_b = g(z_b)
       \]
\STATE \textbf{return} logits $\hat y$
\end{algorithmic}
\end{algorithm}

Given a population-level RCS map $\mathrm{RCS}^{*}$ (Section~\ref{subsection:rcs}),
we first interpolate it to the spatial resolution of the penultimate
feature maps. Negative values in $\mathrm{RCS}^{*}$ correspond to
task-aligned regions (which we wish to upweight), while positive
values correspond to shortcut-aligned regions (which we wish to
downweight or potentially flip). This is controlled by two
hyperparameters: an up-weight factor $\alpha > 0$ and a
down-weight factor $\beta > 0$.

Algorithm~\ref{alg:rcs_attenuation} summarises the forward pass
with RCS$^{*}$-based feature weighting.

\subsection{Computational Details}
\label{appendix:computationaldetails}

We used the \texttt{timm} and \texttt{timm-3d} libraries for model instantiation and training. 
Attribution maps were computed via Grad-CAM (Captum), Layer-wise Relevance Propagation (LRP) (Zennit) and Attention-aware LRP (LXT). 
For Grad-CAM and LRP, we targeted the final convolutional or equivalent semantic layers of each backbone to ensure meaningful localisation:  

\begin{itemize}
    \item \textbf{ResNet50 (2D\&3D)}: \texttt{model.layer4[-1]}
    \item \textbf{MobileNetV3-Large}: \texttt{model.blocks[5][2]}
    \item \textbf{VGG16}: \texttt{model.features[29]}
    \item \textbf{InceptionV3}: \texttt{model.Mixed\_7c}
\end{itemize}

Training used the \texttt{Adam} optimiser with a learning rate of $5\times 10^{-5}$ and gradient clipping at a global norm of 0.5.
For data augmentation, no transforms were applied in the 2D settings; random rotations and translations were used in the 3D setting. For TS we use a biased validation split to encourage shortcut learning; BA uses a more balanced validation split matched to its training distribution. For the 3D model weight initialisation, the 2D ImageNet pretrained weights are converted to 3D based on \textit{timm-3d}\footnote{\url{https://github.com/ZFTurbo/timm_3d}}.

For stability analysis, we trained $100$ independent seeds per model; otherwise, a single seed was used. 

The grid search to find the optimal $\alpha$ and $\beta$ for the test-time attentuation (see Section~\ref{subsection:shortcutmitig}) is fixed for all the models and folds. Additionally, the parameters are searched to find the optimal worst-group accuracy while keeping the balanced accuracy within $0.5\%$ of the optimal and/or initial balanced accuracy.

\subsection{Additional Analysis on ADNI}
\label{appendix:subsection:ADNI}

Figure~\ref{fig:adni_attrmaps} shows the aggregated atlas-based attribution ranks over the ADNI test set for the SA and TS models. The SA map displays a distributed pattern that includes medial temporal, subcortical and brainstem regions, which is compatible with MRI work reporting sex differences in regional brain morphology and asymmetry, as well as sex-specific trajectories of brain aging in middle-aged and older adults \cite{coffey1998sex,good2001cerebral,ruigrok2014meta,armstrong2019sex}. However, given the moderate SA performance (79.3\% accuracy), we interpret these maps as indicating regions carrying sex-predictive signal for the model rather than as a precise localisation of established sex effects. The aggregated ranks for the TS model are highly similar to the SA model. Potential shortcut-related alignment between Alzheimer’s and sex attributions, quantified via partial correlations and Region Contribution Score (RCS) maps (see Section~\ref{sec:disc_partitions}), is illustrated in Figure~\ref{fig:adni_srel}.

\begin{figure}[ht!]
\centering
\subfloat[SA\label{fig:adnibaseline}]{
\resizebox{.5\textwidth}{!}{
  \includegraphics[width=0.49\linewidth]{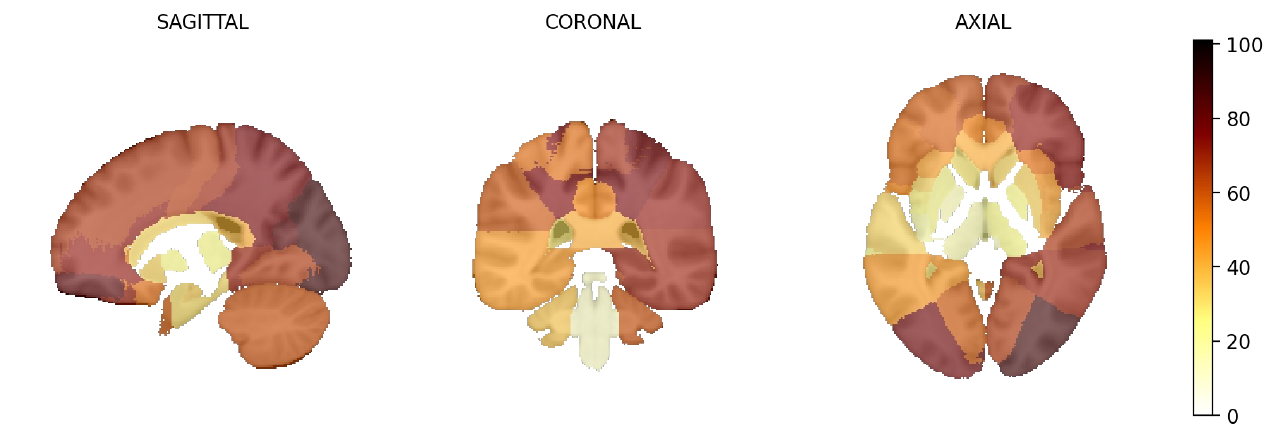}}}\hfill
\subfloat[TS\label{fig:adnisex}]{
\resizebox{.5\textwidth}{!}{
  \includegraphics[width=0.49\linewidth]{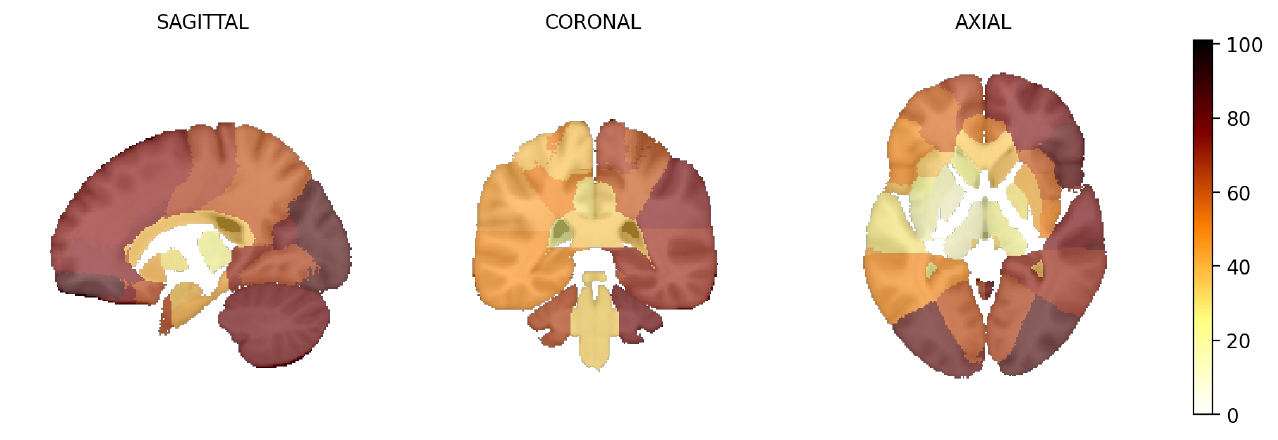}}}
\caption{Aggregated attribution ranks over the ADNI test set for the atlas based partition. Brighter regions indicate higher rank which implies higher region importance for the task. Scaling is done separately for SA, TS (per row).}
\label{fig:adni_attrmaps}
\end{figure}

\end{document}